\theoremstyle{plain}
\newtheorem{theorem}{Theorem}[section]
\newtheorem{proposition}[theorem]{Proposition}
\theoremstyle{definition}
\theoremstyle{remark}
\begin{document}

\twocolumn[
\arxivtitle{Learning to Fuse Temporal Proximity Networks:\\A Case Study in Chimpanzee Social Interactions} 
\arxivsetsymbol{equal}{*}

\begin{arxivauthorlist}
\arxivauthor{Yixuan He}{asu}
\arxivauthor{Aaron Sandel}{uta}
\arxivauthor{David Wipf}{Amazon}
\arxivauthor{Mihai Cucuringu}{UCLA,UnivOx}
\arxivauthor{John Mitani}{umich}
\arxivauthor{Gesine Reinert}{UnivOx,turing}
\end{arxivauthorlist}

\arxivaffiliation{asu}{School of Mathematical and Natural Sciences, Arizona State University, Phoenix, AZ, United States}
\arxivaffiliation{uta}{Department of Anthropology, University of Texas at Austin, Austin, TX, United States}
\arxivaffiliation{UCLA}{Department of Mathematics, University of California Los Angeles, Los Angeles, CA, United States}
\arxivaffiliation{UnivOx}{Department of Statistics, University of Oxford, Oxford, United Kingdom}
\arxivaffiliation{Amazon}{Amazon Web Services AI Shanghai Lablet, Shanghai, China}
\arxivaffiliation{umich}{Department of Anthropology, University of Michigan, Ann Arbor, MI, United States}
\arxivaffiliation{turing}{The Alan Turing Institute, London, United Kingdom}

\arxivcorrespondingauthor{Yixuan He}{Yixuan.He@asu.edu}

\arxivkeywords{Machine Learning, Network Time Series, Time Series Modeling, Network Analysis, Animal Social Networks, Node Similarity, Clique Detection}

\vskip 0.3in
]



\printAffiliationsAndNotice{}

\begin{abstract}
How can we identify groups of primate individuals which could be conjectured to drive social structure? To address this question, one of us has collected a time series of data for social interactions between chimpanzees. Here we use a network representation, leading to the task of combining these data into a time series of a single weighted network per time stamp, where different proximities should be given different weights reflecting their relative importance. We optimize these proximity-type weights in a principled way, using an innovative loss function which rewards structural consistency for consecutive time steps. The approach is empirically validated by carefully designed synthetic data. Using statistical tests, we provide a way of identifying groups of individuals that stay related for a significant length of time. Applying the approach to the chimpanzee data set, we detect cliques in the animal social network time series, which can be validated by real-world intuition from prior research and qualitative observations by chimpanzee experts.
\end{abstract}

\section{Introduction}
\label{sec:introduction}
What drives the social structure in primates? To address this, one of us has collected a rich data set on chimpanzees in Uganda. Starting in 1998, the data set records proximities between chimpanzees when observed. Due to visibility issues in the tropical forest and limits on the time that researchers can spend in the forest recording chimpanzee data, only a small number of these proximities have been observed. Surprisingly, though these chimpanzees remained in relatively stable social structures initially, there is a dramatic change from 2014 in the population structure, which has attracted our attention. To analyze these social proximity data and to further explore what leads to the change, we choose a network representation; a key contribution of this paper is a principled way of obtaining such a representation. 

Social networks are important in social and biological sciences, where individuals are treated as nodes connected by some interactions which are treated as possibly weighted edges between nodes; see, for example, \citet{wasserman1994social}. Creating networks based on various interactions is useful for modeling a range of dynamics, such as disease spread and information transfer. Networks are also valuable for determining social structures. Although social networks have featured prominently in sociology and, in the last two decades, animal behavior~\citep{pinter2014dynamics}, several challenges persist: What behaviors should be used to construct networks? How are appropriate weights determined? 
These issues are compounded when data interactions of different types are available; here we are thinking of many proximity records with different proximity ranges.

Multilayer networks are one possibility for representing such records, as in  \citet{kivela2014multilayer}, but it may not be easy to interpret and may obfuscate that different proximities are related.
The situation becomes even more intricate given a time series of proximity data, as in the chimpanzee case study which motivates our work. The animals form a variety of social groupings that change throughout the day. The same is true for other species with fission-fusion social dynamics~\cite{silk2014importance, ramos2018quantifying}. A biological research question is then to identify groups of individuals close to each other at multiple times; in a human data set, one might interpret these as friendship groups \cite{sekara2014strength}. For example, \citet{stopczynski2018physical} observed that for human proximity data, close contacts (within about 1 m) have different characteristics than long-range contacts (within around 10-15 m); close contacts tend to be friends, whereas long-range contacts are more likely to be chance encounters. They construct two different networks for these data; our approach provides a combined (``fused") network instead, by attaching importance weights to various proximity levels.

There are different approaches for exploring potential driving forces for social structures~\cite{roehner2007driving}, but here we focus on strong dyadic bonds, motivated by a baboon example~\cite{lerch2021better}. We explore potential structural drivers from a novel angle of long-term close relationships, addressing several methodological questions: (1) How to combine the available proximity data into one network per time step? 
(2) How do we identify groups of individuals close to one another more often than expected by chance?

Traditional methods for constructing animal social networks often rely on statistical models~\cite{farine2015constructing,brask2024introduction}, or on reference distributions obtained via randomizations \cite{hobson2021guide}. Instead, we take inspiration from network/graph optimization to model relationships between nodes/entities~\cite{Zhou, he2024robust}, by devising a novel loss function which we optimize. By leveraging different levels of proximity data collected over extended periods, our novel optimization approach is designed to capture the underlying consistency of social relationships. In our case, the \emph{first} question involves constructing a network time series with a single weighted network per time step (instead of, e.g., a multiplex network with 10 layers and $10\times 9/2$ user-defined interlayer weights, which would lead to a large number of parameters to choose). To our knowledge, there is \textbf{no existing work on how to combine multiple networks with the special feature of representing nested levels of proximity into a single network}. Here, we apply optimization concepts to create social networks of chimpanzees by combining networks with nested proximity levels and comparing them over time. Given the dynamic nature of chimpanzee groups, the resulting time series of networks provides a useful system to address the \emph{second} methodological question. 

Our key contributions are summarized as follows:
\begin{itemize}[noitemsep, nolistsep, leftmargin=*]
    \item We propose a novel optimization pipeline to represent real-world multi-level proximity data using networks. The construction pipeline is empirically validated by a carefully designed set of synthetic data.
    \item We provide theoretical contributions on two novel notions of individual similarities across time based on sequences of Bernoulli trials with evolving success rates. The analysis can be utilized to detect long-term close relationships.
    \item We apply the novel network combination/fusion pipeline to present an effective network time series representation of proximity data in a wild chimpanzee social group across time. Using additionally the similarity notions, we show that there is sufficient information in these data to identify groups of chimpanzees which stay in each other's wider community for a surprisingly large amount of time. 
\end{itemize}
\section{Literature Review}

Combining or fusing networks/graphs that represent different views into a unified structure is a well-studied problem relating to our proximity network fusion setting. \citet{kang2020multi} proposed a multi-graph fusion model that simultaneously performs graph combination and spectral clustering. The idea that multi-view data admits the same clustering structure aligns with the structural consistency assumptions in our proposed method. However, this method, together with other multi-view fusion approaches such as \citet{yang2019adaptive} and \citet{yang2024bidirectional}, cannot naturally consider structural consistency for consecutive time steps. \citet{zhang2021adaptive} and \citet{hu2022spatio} leveraged spatio-temporal fusion to address challenges in predicting remaining useful life and in trajectory data analytics, respectively. However, their fusion modules, being within a neural network architecture, cannot be readily applied to combining multiple levels of proximity, as in our case.

Node similarity in network time series is another key topic in our paper. \citet{gunecs2016link} considered neighborhood-based node similarity scores for link prediction. \citet{yang2019time} developed a diffusion model to drive the dynamic evolution of node states and proposed a novel notion of dissimilarity index. The approach in \citet{meng2018coupled} learns node similarities by incorporating both structural and attribute-based information. However, none of them considers node similarity based on long-term close relationships.

Detecting long-term relationships in dynamic systems has been explored from various perspectives. In long-term studies of primate relationships, scientists report how long certain pairs have a high frequency of interaction, but they do not provide a statistical approach to determine what constitutes a persisting relationship. For example,  \citet{mitani2009male} emphasized the significance of long-term affiliative relationships in social mammals by considering pairwise affinity indexes between male dyads~\cite{pepper1999general} to quantify long-term relationships, but the indexes neglect consecutive proximities in the time series. \citet{derby2024female} presented a Bayesian multimembership approach to test what factors predict the persistence of proximity relationships, but persistence is defined deterministically by being in proximity for more than a fixed time length without considering hypothesis-testing based on an expected duration. \citet{qin2019mining} investigated the interplay between temporal interactions and social structures, but their approach is constrained to analyzing periodic behaviors. \citet{escribano2023stability} studied the stability of personal relationship networks in a longitudinal study of middle school students by exploring the persistence of circle structures, which is different from our novel perspectives of node similarity. Finally, our task is not to find consistent communities/close relationships across time but to identify pairs of individuals who stay in the same community. Methods enforcing community consistency over time, like \citet{mucha2010community}, may unintentionally keep pairs together, which could obfuscate the signal of interest. 

Our real-world data pose particular challenges, involving a focal-based biased data collection protocol with a specific hierarchy related to proximity, which makes previous network construction methods and node similarity analysis not directly applicable, resulting in no proper existing baselines. For example, to construct networks from observations, \citet{rabbat2008network} modeled co-occurrence observations via a random walk, and \citet{casiraghi2017relational} assumed a higher-dimensional configuration model, but our data have a clear focal bias, which both papers overlook. \citet{psorakis2012inferring} took a spatial approach around gathering events, but following a focal male is not a gathering event. In \citet{abraham2015low}, the network layers represent separate categories, and \citet{newman2018network} assumed independent edge measurements, but in our data, the proximities are nested. As locations were not recorded in our data, trajectory-based analysis, e.g., \citet{han2021graph}, is not applicable. Due to the data collection bias, temporal embeddings via graph learning, e.g., \citet{haddad2019temporalnode2vec}, edge survival models, e.g., \citet{ccelikkanat2024continuous}, or temporal exponential random graph models, e.g., \citet{leifeld2019theoretical}, are also not appropriate. Community tracking methods, e.g.,  \citet{mazza2023modularity}, are not proper baselines either, as they cannot naturally capture dyadic relationships. Due to data collection bias and missing observations, directly aggregating edge weights or counting common neighbors would be inappropriate. Instead, we apply a community detection method to mitigate the focal bias. 
\section{Motivation: Chimpanzees in Uganga}
\label{sec:chimp_data_description}
\textbf{Notations.} Denote a static undirected weighted network (graph, used interchangeably) as $\mathcal{G}=(\mathcal{V}, \mathcal{E}, w),$ with $\mathcal{V}$ the set of nodes, $\mathcal{E}$ the set of edges encoding node relationships, and $w \in [0, \infty)^{| \mathcal{E}|}$ the set of edge weights. Such a network can be represented by the adjacency matrix $\mathbf{A} = (A_{ij})_{i,j \in \mathcal{V}}$,  
with $\mathbf{A}_{ij}=0$ if no edge exists from $v_i$ to $v_j$; if there is an edge $e$ between $v_i$ and $v_j$, we set $A_{ij} = w_{e}$, the edge weight. A \emph{network time series} is a time series of networks,  $\{\mathcal{G}^{(t)}\},$ where for each time step $t,$ the network is static. At each time step, a \emph{multiplex network} is constructed with each type of interaction between the nodes being described by a single-layer network; the different layers of networks describe the different modes of interaction.
We denote the multiplex network time series with $H$ layers by $\{\mathcal{G}^{(h, t)}\},$ where $h\in\{1, \dots, H\}$ refer to the different types/layers.

\textbf{Data Description.} Our methods are motivated by a unique data set on chimpanzees one of us collected over nearly three decades, which was manually collected by highly trained observers following a rigorous protocol. We received clearance from relevant animal care and use committees due to the purely observational nature of the research. This data set is composed of observations of a community of wild chimpanzees in Ngogo, Kibale National Park, in Uganda. Long-term research began at Ngogo in 1995, and chimpanzees, especially adult males, were habituated to human observers starting in 1998. The social relationships of these chimpanzees have been studied for 30 years, see for example \citet{langergraber2017group}. The chimpanzees at Ngogo were part of one social group, but chimpanzees exhibit ``fission-fusion'' social dynamics, so they are rarely, if ever, in the same place at the same time. Instead, they form temporary associations that change throughout the day. Prior studies have identified various structures within chimpanzee groups~\cite{badihi2022flexibility, mitani2003social}. In our case, the total number of individuals varies from 150 to 200 per year. The data set we use in this paper relies on 24 years of data on 219 individuals (77 adult male chimpanzees who were the focus of behavioral observations, and 142 additional chimpanzees--20 males and 122 females--that were not the focus of direct observation, but interacted with the focal subject during observations). Data were collected every year except 2020, when COVID-19 restrictions prohibited this activity.
Field seasons typically occurred during two or three consecutive months per year. Observations targeted a subset of adult male chimpanzees. The behavioral observation procedure involved following one ``focal" male for an hour and recording three main social interactions: (1) ``party": all chimpanzees that were in social/spatial association (within roughly 100m of the focal subject during the hour session); (2) ``proximity": chimpanzees within physical proximity of the focal subject (within 0-2m or 2-5m) recorded at 10-minute intervals; and (3) ``grooming": chimpanzees involved in grooming with the focal subject during the hour sampling period (which should be within 2m of the focal subject). After an hour of observation, another focal subject was selected. Several other behaviors were also recorded (e.g., territorial patrols, hunting, aggression, self-grooming). This is a standard sampling method in primate behavior, see \citet{altmann1974observational}. 

\textbf{Proximity Types.} The data set contains information on different types of proximities. The proximity types relate to distances, grooming, and whether the proximity is to a focal male or just in the vicinity of a focal male. For example, we denote by ``prox2" and ``prox5" that an individual is within 2m (two meters) or 2-5m of a certain focal chimpanzee, respectively; ``party" denotes an individual in the social/spacial association (within roughly 100m) of a focal subject but not within 5m distance. In Fig.~\ref{fig:proximity_types}, we illustrate different types of proximity levels. Here we use F to abbreviate a focal male, A, B, and C are in ``prox2" to F, D, E, G, and H are in ``prox5" to F, while the rest are in ``party" to F. In particular, B is grooming with F. In total, we obtain ten proximity types as detailed in Appendix (App.)~\ref{app_sec:proximity_types}. 

A natural question then arises: \emph{How do we use these data to construct informative networks for the entire group}? 

\begin{figure}[ht]
\centering
\includegraphics[width=0.6\linewidth]{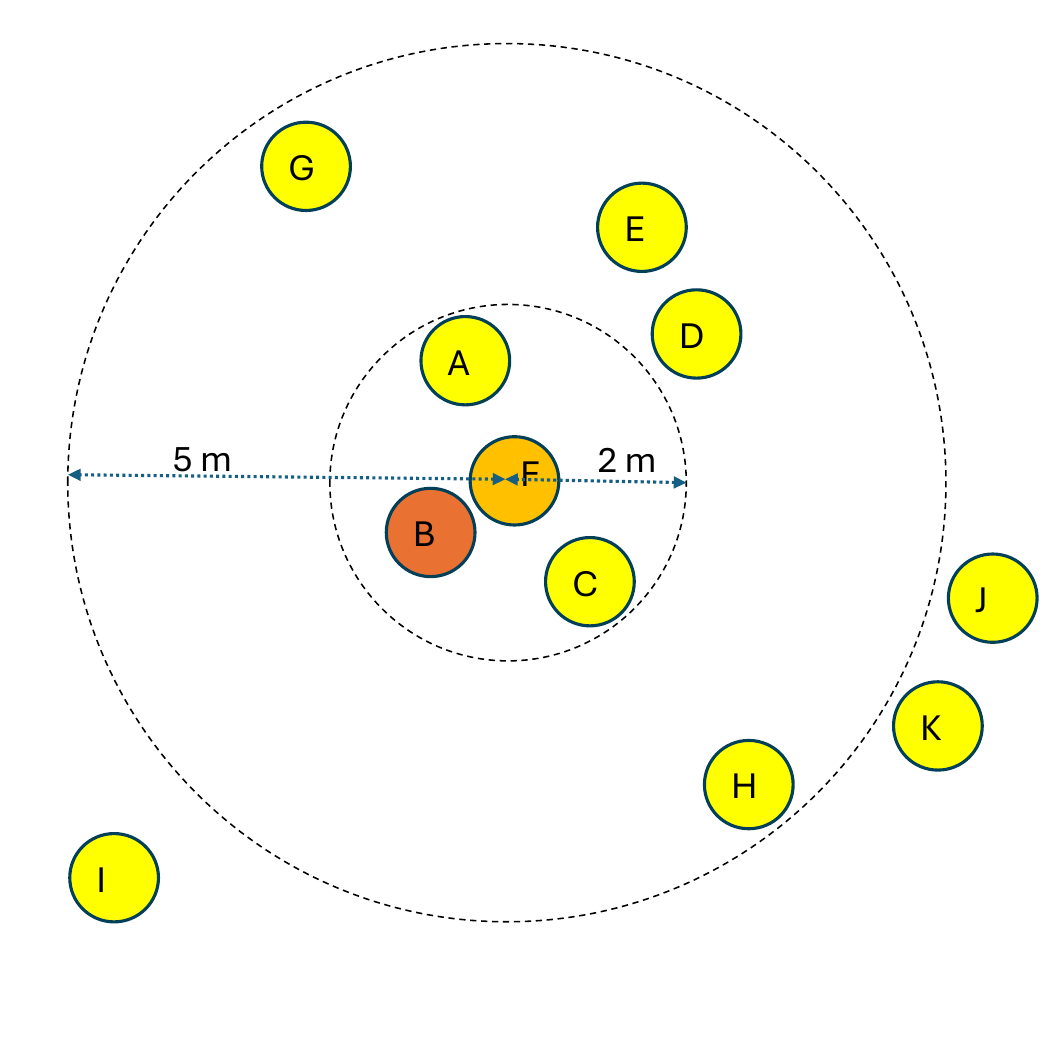}
    \caption{Proximity in chimpanzees. F is a focal male, A, B, and C are in ``prox2" to F, D, E, G, and H are in ``prox5" to F, while the rest are in ``party" to F. B is grooming with F.}
    \label{fig:proximity_types}
\end{figure}
As the recorded data is binary and there are various types of interactions (proximities), we start with a multiplex network representation with layers representing proximity levels. Based on these various levels of proximity, we construct networks based on single relationships, where every edge is given a unit edge weight for a single day if on that date at least one occurrence of that type is observed. The yearly edge weight is computed as the sum of the edge weights (0 or 1) obtained from all dates involved. As a result, for each year, we obtain 10 weighted single-relationship networks, one for each type. An example is provided for August 2006 in Fig.~\ref{fig:separate_graphs200608} in App.~\ref{app_sec:extended_plots_and_tables}. In this example, isolated nodes are omitted in the plots for the 10 single-relationship networks.

Pairs of chimpanzees may stay in similar locations in consecutive 10-minute intervals. We hypothesize that consecutive occurrences may contribute to an additional intensity of interaction/proximity. Here, we regard same-day occurrences as consecutive ones. Therefore, for each single-relationship network, we construct an ancillary network to specifically record multiple occurrences on the same day. For each ancillary network, an edge is added for each day if multiple occurrences are observed, with the number of consecutive occurrences minus one as the daily weight, while the total edge weight is the sum of the daily weights. We can view the raw networks as counting the number of days in a year when a certain proximity is observed between two individuals, while the ancillary networks count how many times multiple occurrences are observed within this year, minus the number of days, so adding both networks together counts the total number of occurrences.

Another question naturally follows: \emph{Can we fuse these networks effectively to describe the time series of chimpanzee proximities?} Here, ``fuse" refers to combining related layers in the multiplex network into a single network; ``effectively" means preserving proximity and hierarchy information while achieving small error, assuming an underlying ground-truth network representation exists.

Indeed, we observe inherent hierarchies in the proximity levels based on the upper bound of pairwise differences described in each type, indicating that not every type of interaction should be given the same weights. Hence, we propose an approach that learns weights for each type of interaction and thus obtains a weighted network as a representation.

\section{A Parametric Network Model}
Suppose the ground-truth network $\mathcal{G}^{(t)}$ for each time step $t$ can be expressed by its adjacency matrix $\mathbf{A}^{(t)}$, then with $H$ hierarchies of proximity levels ($H=10$ for chimpanzees), we have $\mathbf{A}^{(t)} = \sum_{h=1}^H W_h \mathbf{A}^{(h, t)}$ with positive nondecreasing weights $W_h$'s, i.e., $0 < W_{h_1} \leq W_{h_2}$ if $h_1\leq h_2.$ Given the considerations for consecutive occurrences, we further split each $\mathbf{A}^{(h, t)}$ into a weighted sum of the raw network, $\mathbf{A}^{(\text{raw}, h, t)},$ and the ancillary (``add'') network, $\mathbf{A}^{(\text{add}, h, t)}.$ In the chimpanzee example, the ancillary/add networks represent ``consecutive'' (i.e., same-day) occurrences.

For the purpose of modeling, we further express the nondecreasing weights as a sum of nonnegative increment weights, $W_h=\sum_{j=1}^h w_j,$ resulting in
{\small\begin{equation}
    \begin{aligned}
    \label{eq:adj_t}
    \mathbf{A}^{(t)} = \sum_{h=1}^H W_h \mathbf{A}^{(h, t)}=\sum_{h=1}^H (\sum_{j=1}^h w_j) (\mathbf{A}^{(\text{raw}, h, t)} + w_\text{add}\mathbf{A}^{(\text{add}, h, t)}),
    \end{aligned}
\end{equation}}
so that $W_h$ is the network combination weight for the $h$-th hierarchy, $w_j=W_j-W_{j-1}\geq 0$ is the nonnegative increment from $W_{j-1}$ to $W_{j},$ $w_\text{add}\in[0,1]$ is the increment weight for a single network type when increments are considered, e.g., when consecutive occurrences of a certain type of proximity are recorded, $\mathbf{A}^{(h, t)}$ is the adjacency matrix for a single network type $h$ at time step $t$, $\mathbf{A}^{(\text{raw}, h, t)}$ is the raw adjacency matrix without considering added increments, and $\mathbf{A}^{(\text{add}, h, t)}$ records the increments for each type. Here we set $w_1=W_1=1$ fixed by default for normalization. Note that we assume that $\mathbf{A}^{(\text{raw}, h, t)}$ and $\mathbf{A}^{(\text{add}, h, t)}$ are known, while $w_j$ and $w_\text{add}$ are learnable parameters. 
\section{Proposed Method: ProxFuse}
\label{sec:proposed_method}
To learn the combination weights, one possibility is to use deep neural networks such as a temporal graph neural network~\cite{rozemberczki2021pytorch} and make fusing the networks as part of the end-to-end training procedure by conducting regular tasks on graphs. However, in our initial attempts utilizing temporal graph neural networks, even though the prediction error can be reduced to almost zero, the parameters of interest are not properly optimized. This is probably due to overparametrization, with only a tiny proportion of all parameters being what we truly want to optimize. For example, for our chimpanzee data set, TGCN~\cite{zhao2019t} can easily require above 1000 parameters, far more than the ten combination weights that we actually need to learn. In general, deep neural networks have far more parameters than those of interest, making it hard to optimize the key parameters (here, the combination weights of different layers in the multiplex network). Hence, we propose a simple optimization model to fuse proximity networks (termed \emph{ProxFuse}) to learn the increment weights, where trainable parameters are primarily network combination weights. Inspired by chimpanzee experts, we propose an objective based on relative structural consistency for consecutive time steps, involving both the global network structure and local connection strength patterns. Specifically, we assume that the adjacency matrix may evolve, but the underlying similarity structure between nodes (based on which the adjacency matrix is generated, but with possible magnitude fluctuations) and the relative magnitudes of weighted node degrees should stay relatively consistent with only mild updates for consecutive time steps. In this real-world application, the networks are not fixed in time. The loss function penalizes dramatic changes, but a gradual evolution of the network structure is possible, and indeed observed.

Let $\mathcal{N}^{(t)}$ denote the set of nodes that exist (i.e., with nonzero degree) at time step $t\in \{ 1, 2, \ldots, T\}$; we assume $T \geq 2$ so that we have at least two time steps.
For each time step $t$, we first extract the set of nodes $\mathcal{N}^{(t, t+1)}=\mathcal{N}^{(t)}\cap\mathcal{N}^{(t+1)}$ which co-exist at time steps $t$ and $t+1$. We then construct subnetworks based on $\Tilde{\mathbf{A}}^{(t, t, t+1)}=\mathbf{A}^{(t)}_{\mathcal{N}^{(t, t+1)}, \mathcal{N}^{(t, t+1)}}$ and $\Tilde{\mathbf{A}}^{(t+1, t, t+1)}=\mathbf{A}^{(t+1)}_{\mathcal{N}^{(t, t+1)}, \mathcal{N}^{(t, t+1)}}$; both matrices take values in $\mathbb{R}^{\left\lvert\mathcal{N}^{(t, t+1)}\right\rvert\times \left\lvert\mathcal{N}^{(t, t+1)}\right\rvert}$. For each network, we construct a similarity graph $\mathbf{S}^{(t, t, t+1)}$ based on the adjacency matrix $\Tilde{\mathbf{A}}^{(t, t, t+1)}$ by taking into account the node similarities. Indeed, we treat each row $\Tilde{\mathbf{A}}^{(t, t, t+1)}_i$ in $\Tilde{\mathbf{A}}^{(t, t, t+1)}$ as a feature vector for node $i$ at time $t$, and compute the cosine similarity values between individuals, i.e., $\mathbf{S}^{(t, t,t+1)}_{i,j}=\frac{\Tilde{\mathbf{A}}^{(t, t, t+1)}_i \cdot \Tilde{\mathbf{A}}^{(t,t, t+1)}_j}{\left\lVert \Tilde{\mathbf{A}}^{(t, t, t+1)}_i\right\rVert_2 \left\lVert \Tilde{\mathbf{A}}^{(t, t, t+1)}_j\right\rVert_2},$ where the numerator takes the vector dot-product and $\lVert \cdot\rVert_2$ denotes the vector 2-norm. We can similarly compute $\mathbf{S}^{(t+1, t,t+1)}_{i,j}=\frac{\Tilde{\mathbf{A}}^{(t+1, t, t+1)}_i \cdot \Tilde{\mathbf{A}}^{(t+1,t, t+1)}_j}{\left\lVert \Tilde{\mathbf{A}}^{(t+1, t, t+1)}_i\right\rVert_2 \left\lVert \Tilde{\mathbf{A}}^{(t+1, t, t+1)}_j\right\rVert_2}.$ We treat the similarity matrix as the underlying network generator for edges; so it should stay relatively consistent for consecutive time steps. Based on $\mathbf{S}^{(t, t, t+1)}$ and $\mathbf{S}^{(t+1, t, t+1)},$ one objective is to minimize 
{\small\begin{align}
\label{eq:loss_similarity} \mathcal{L}_\text{sim}=\frac{1}{T-1}\sum_{t=1}^{T-1}\sum_{i,j\in\mathcal{N}^{(t,t+1)}}\Big(\mathbf{S}^{(t, t, t+1)}_{i,j}-\mathbf{S}^{(t+1, t, t+1)}_{i,j}\Big)^2.
\end{align}}
In addition, we assume that weighted node degrees remain relatively stable for consecutive time steps. We compute normalized weighted node degrees for time step $t$ as $d_i^{(t,t,t+1)}=\frac{\sum_j\Tilde{\mathbf{A}}^{(t,t, t+1)}_{i,j}}{\sum_{j,k}\Tilde{\mathbf{A}}^{(t,t, t+1)}_{j,k}}.$ Likewise, we can compute $d_i^{(t+1,t,t+1)}.$ We then obtain another term of loss function as
{\small\begin{align}
    \label{eq:loss_deg}
 \mathcal{L}_\text{deg}=\frac{1}{T-1}\sum_{t=1}^{T-1}\sum_{i\in\mathcal{N}^{(t,t+1)}}\left(d^{(t, t, t+1)}_{i}-d^{(t+1, t, t+1)}_{i}\right)^2.
\end{align}}
To penalize extreme combination weights, we additionally add a regularization term as
{\small\begin{equation}
    \label{eq:loss_reg}
    \mathcal{L}_\text{reg}=\frac{1}{H}\left(\lVert w_\text{add}\rVert_2^2 + \sum_{h=2}^{H}\lVert w_h\rVert_2^2\right).
\end{equation}}

To summarize, our optimization loss function amounts to
\begin{equation}
    \label{eq:total_loss}
    \mathcal{L}=\alpha_1\mathcal{L}_\text{sim} + \alpha_2\mathcal{L}_\text{deg} + \alpha_3\mathcal{L}_\text{reg}.
\end{equation}
The values of $\alpha_1, \alpha_2,$ and $\alpha_3$ are considered to be hyperparameters. Since we normalize the similarity loss and the degree loss, we expect these loss values to be balanced, and choose an equal factor of $\alpha_1=\alpha_2=1$.  $\alpha_3$ is considered a small nonzero value for regularization, and we set $\alpha_3=0.001$ by default. We do not conduct hyperparameter selection/ablation studies, as both main loss terms need to be considered and balanced to model relative consistency.

To cope with the nonnegativity requirement of the learnable network combination weights, $w_h$'s, we employ the inverse of the softplus function $\Tilde{w}_h=\log(\exp(w_h)-1)$ to the initial values of the $w_h$'s, replacing $w_h=0$ by $w_h=0.0001$ for numerical stability. We then employ the softplus function $w_h=\log(1+\exp(\Tilde{w}_h))$ to transform $\Tilde{w}_h\in\mathbb{R}$ back to $w_h>0,$ and set $w_h=0$ for tiny $w_h$ to ensure $w_h\in[0, \infty)$. For $w_\text{add}\in(0, 1)$ (similarly treating $0$ as $0.0001$ and $1$ as $0.9999$), we apply a logit transformation with $\Tilde{w}_\text{add}=\log\left(\frac{w_\text{add}}{1-w_\text{add}}\right)\in\mathbb{R}$, whose inverse function is the logistic function $w_\text{add}=\frac{1}{1+\exp(-\Tilde{w}_\text{add})}.$

We split the time series into training, validation, and test sets, where validation is used for early stopping, and the test set is used for learned model comparison and to select the ``best" optimized set of parameters based on the lowest test loss value (setting $\alpha_3=0$ in Eq.~\eqref{eq:total_loss} during selection).

To address the possible influence of parameter initialization, we run multiple initializations to obtain estimated optimal sets of the network combination weights. One heuristic of setting different initializations is: (1) uniform initial values from $[0.1, 0.2, 0.5, 1, 2, 5]$ (note that $w_\text{add}\leq1$ so we cap larger initial values to 1); (2) one parameter is initialized to be 1 and the rest initialized to be 0.1.
\section{Experiments}
\label{sec:experiments}
As discussed in Literature Review, \emph{no existing method is directly applicable to our task}. To validate the efficacy of our novel network combination method, we test our proposed method on the chimpanzee data, and construct synthetic models with known combination weights for further empirical evidence. Importantly, these kinds of data sets involve enormous human labor across years, typically representing lifetime efforts, and hence they tend to be private, e.g., \citet{morrison2021rapid, derby2024female, badihi2022flexibility}. Therefore, while other real-world data sets exist, we do not have direct access to them, but researchers with those data sets may benefit by building upon our work here and adapting the pipeline to their data. Meanwhile, open-source data sets such as \citet{franz2015knockouts} and \citet{lusseau2003bottlenose} are not suitable here due to their lack of hierarchies in proximity levels recorded in the data sets or the lack of focal bias. Given its uniqueness, the chimpanzee data set is in itself rich enough to warrant a detailed study. A thorough experimental setup is provided in App.~\ref{app_sec:experiment_setup}.
\subsection{A Carefully Designed Synthetic Network Model}
The synthetic model fixes the same cumulative weighted network $\mathcal{G}^{(t)}=(\mathcal{N}^{(t)},\mathcal{E}^{(t)})$ for each time step; the model assigns the edges to each hierarchy layer $h$ and to the two matrices, $\mathbf{A}^{(\text{raw}, h,t)}$ and $\mathbf{A}^{(\text{add}, h,t)}$. Different assignments then produce potentially different layer-specific edge weights. These assumptions are made as a sanity check to ensure that ProxFuse is able to achieve near-zero similarity and degree loss values. We let $n$ be the number of nodes in total, $T$ be the number of time steps, $H$ be the number of hierarchies (e.g.,  different proximity levels), and $\{p_h\}_{h=1}^H$ edge probabilities for each layer of the multiplex network. Details of the construction are provided in App.~\ref{app_sec:synthetic_generation}.
\subsection{Synthetic Data Empirical Results}
We conduct experiments on multiple synthetic data sets. Here we set $p_h=0.1$ throughout, $n=100, p_\text{add}=0.1, T=14$ with training:validation:test=8:3:3 in the split. We take $H=5,$ vary $w_\text{add}\in\{0, 0.3\},$ and take different $w_h$'s. We compare the results for $\alpha_3=0$ and $\alpha_3=0.001.$ 

With $\alpha_3$ fixed, the final optimized weights are typically robust to initialization. In general, with $\alpha_3=0,$ for our synthetic data, the proposed method can perfectly recover the combination weights of interest up to one decimal point. With regularization considered ($\alpha_3=0.001$), the final estimated values are typically slightly smaller for actual nonzero parameters, while sometimes more parameters of interest are urged to take nonzero values. The set of parameters and optimized values based on the lowest test loss is provided in Tab.~\ref{tab:synthetic_res} in App.~\ref{app_sec:extended_plots_and_tables}. To conclude, our carefully designed synthetic data sets verify that ProxFuse can recover combination weights from ground truth with robustness to initialization.
\subsection{Application to Chimpanzee Networks}
\label{sec:chimp_network_construction}
Applying ProxFuse to our chimpanzee data, we optimize the network combination weights by assuming a relatively stable normalized similarity matrix and relatively stable normalized weighted degrees for consecutive years. As chimpanzee researchers observed major changes in the chimpanzee social interactions from 2014, we conduct experiments for the period 1998 to 2012, leaving a one-year gap for the change to happen. We use 1998-2005 as the starting year for the training set, 2006-2008 as the starting year of the validation set, and 2009-2011 as the starting year of the test set. 

From our experiments, no matter which initialization we use, the optimal set of parameters, with one decimal point, is mostly robustly optimized to: $w_1=1.0, w_2=w_3=w_4=0.0, w_5=4.7, w_6=1.3, w_7=1.6, w_8=2.0, w_9=0.0, w_{10}=0.1,$ and $w_\text{add}=0.0.$ Although one initialization produces $w_{10}=0.2,$ this produces a slightly larger test loss value (setting $\alpha_3=0$ in Eq.~\eqref{eq:total_loss} for tests) as $0.01063\pm 0.00000,$ compared to the others, which obtain $0.01061\pm 0.00000$. The robustness of the final set of learned parameters to initialization empirically validates the efficacy and robustness of ProxFuse. This gives us network combination weights $W_1=W_2=W_3=W_4=1.0, W_5=5.7, W_6=7.0, W_7=8.6, W_8=W_9=10.6, W_{10}=10.7,$ and addition parameter $w_\text{add}=0.0.$

The combination of final learned weights implies that the first four types of proximity involving ``party" individuals should roughly be treated the same, possibly due to the high variance in possible actual distances between individuals (see J \& K in Type 1 and F \& I in Type 4 in Fig.~\ref{fig:proximity_types} for example). With the leading magnitude in $w_5,$ we observe the large proximity gap between ``party" individuals and those within 5m from the focal subject. We can also explore proximity differences from the nontrivial gaps between $W_5, W_6, W_7,$ and $W_8.$ With $w_9=0,$ we conclude that being within a circle with a radius of 2m is probably already a very close relationship. The small addition of $w_{10}$ to grooming indicates that individuals are actually more closely related if they groom each other. Finally, $w_\text{add}=0$ indicates that consecutive occurrences are normally recorded only because chimpanzees may stay in the same place for a while, instead of corresponding to another level of increased proximity.

For completeness of evaluation, if we set $\alpha_3=0$ during training, we obtain an even smaller optimized average test loss of 0.007536 with a uniform initialization of all parameters to be 5.0 and $w_\text{add}$ initialized to 0.1. The final result is $w_1=1.0, w_2=w_3=w_4=0.0, w_5=39.7, w_6=w_7=0.0, w_8=40.0, w_9=w_{10}=0.0,$ and $w_\text{add}=0.0,$ resulting in $W_1=W_2=W_3=W_4=1.0, W_5=W_6=W_7=40.7, W_8=W_9=W_{10}=80.7,$ and addition parameter $w_\text{add}=0.0.$ This implies that the most notable proximity gap comes from being within 5m of the focal subject and from being within 2m of the focal subject. These two nontrivial gaps align with the two biggest learned increments when we set $\alpha_3=0.001$, i.e., $w_5$ and $w_8.$ Without regularization, however, the optimized final values are not that robust to initializations and may be a bit extreme in terms of magnitudes. Therefore, we adopt the optimized final weights from $\alpha_3=0.001$ for further analysis.

We compare learned networks over time and provide some key statistics in Fig.~\ref{fig:key_statistics} of App.~\ref{app_sec:extended_plots_and_tables}. 
The local clustering coefficient is a measure of the degree to which nodes tend to cluster together. Closeness centrality measures how close a node is to all other nodes, calculated by finding the average shortest distance between a node and all other nodes in the network. Here, the distance is computed by taking inverse edge weights $w_e\rightarrow\frac{1}{w_e}$. More information about network summaries can be found in \citet{newman2018networks}. We observe evolution in average weighted degrees, average local clustering coefficient, and average closeness centrality values. The summary statistics indicate the presence of network dynamics even under our structural consistency assumption for consecutive time steps, revealing that structural stability allows gradual evolution rather than no dynamics at all.

\section{Node Similarity in Network Time Series} 
To detect strong bonds for potential structural driver analysis, we propose two notions of similarity between individuals based on node-wise close relations over time. The notion of relatedness is in principle user-defined; here we base it on whether two individuals are in the same ``community'', as explained below. One notion of similarity is based on how many times they are in close relationship over the period when they \emph{co-exist} (i.e., both are non-isolated). We denote this similarity notion as \emph{count similarity}. The other notion is based on the longest time interval during which they keep the close relations; this is the longest stretch of time the two individuals stay related, excluding the time steps when either or both are isolated. The resulting similarity notion can be used to understand the longest duration of close relations for each pair of entities within the network. We denote this notion as \emph{duration similarity}. Here, we only consider time steps where both nodes co-exist, and hence the number of time steps considered for each pair of nodes may differ. Based on the analysis in this section regarding distributions, for each observed similarity value, we can compute its p-value for the null hypothesis that nodes stay related independently and randomly across time. We conduct Bonferroni correction~\cite{vanderweele2019some} to identify significantly similar nodes.

\subsection{Theoretical Analysis of Similarity Notions}
\label{subsec:similarity_theorems}
We propose a novel definition of node similarity by testing the null hypothesis that the event of two nodes being closely related is drawn randomly and independently over time. To quantify the two notions of similarity, we carry out a theoretical analysis on sequences of independent Bernoulli trials with different success probabilities over time. The distribution of the number of successes of independent but not necessarily identically distributed Bernoulli random variables is called the Poisson-Binomial distribution. In general, there is no closed-form available for its probability mass function, but Thm.~\ref{thm:count_successes_distribution} in App.~\ref{app:proofs} provides a recursion formula for it. We use this recursion to assess the significance for the count similarity. Similarly, Thm.~\ref{thm:longest_consecutive_successes_distribution} in App.~\ref{app:proofs} gives a recursion formula for the longest success run in such a sequence, which we use to assess significance for duration similarity. 
\subsection{Node Similarity via Community Identities}
A central concept here is the notion of \emph{close relations}. To yield independent samples, we propose to define two nodes to have a close relationship at a certain time step if and only if they share the same community identity, for a time step when they co-exist. For time steps when at least one node has no record, we disregard them in the computation.

Since we think of relatedness as being in the same community, we carry out community detection at every time step, yielding a sequence of partitions. For any two nodes $i$ and $j$, for each time step they co-exist, we record whether or not they are assigned to the same community, resulting in a sequence of entries taking value 0 (different communities) and 1 (same community), denoted as $\{B^t_{i,j}\}$. We then assess \emph{count similarity} via the number of times that $i$ and $j$ are in the same community, which is the number of 1's in this sequence; we also compute \emph{duration similarity} by the length of the longest shared path between them, which is the length of the longest run of consecutive 1's in this sequence. Applying Thm.~B.1 and B.2, we use the random variable $C_{i,j}^{T}$ to count the number of times nodes $i$ and $j$ belong to the same community for time stamps they co-exist, and use $D_{i,j}^{T}$ to denote the longest shared path throughout times they co-exist. We are then left to compute the success probabilities (i.e., probabilities of two nodes staying in the same community) over time, which is denoted as $\{p_{i,j}^t\}.$ Prop.~\ref{prop:prob_same_community_each_time} in App.~\ref{app:proofs} computes these probabilities to fully apply Thm.~\ref{thm:count_successes_distribution} and \ref{thm:longest_consecutive_successes_distribution}.
\subsection{Application to Chimpanzee Networks}
For each yearly graph, we employ the popular Leiden algorithm~\cite{traag2019louvain} to construct communities for each time step. The Leiden algorithm, by default, detects communities without requiring user choice that need additional justification. The number of communities that the Leiden algorithm identifies matches the intuition of the chimpanzee experts involved in the study, and hence is deemed appropriate. In order to mitigate the effect of randomness inherent in the community detection algorithm, we run the Leiden algorithm 100 times for each network, and pick the partition with the largest value of {\em modularity}, a standard quality measure in community detection~\cite{newman2018networks}. 
Since focals are more often observed, merely thresholding the proximity counts would not reflect the data well. Instead, with the above theorems, for each observed similarity value, we compute its $p$-value for the null hypothesis that nodes are closely related (in our case, belonging to the same community) independently and randomly across time. In other words, the null hypothesis is: a pair of nodes stays in the same community at random at each time step. The alternative hypothesis is: a pair of nodes stays closer than random over time. We conduct the Bonferroni correction to select the most significantly similar nodes.

Fig.~\ref{fig:similarity_and_p_value_full} in App.~\ref{app_sec:extended_plots_and_tables} visualizes the similarity graphs (with similarity values as edge weights) and $p$-values for the learned yearly networks in chimpanzees. In order to further prove the concept of our graph combination merits, we compare the learned networks with two baselines. The first baseline (``unlearned") simply uses hardcoded, unlearned yearly networks by setting $w_j=0.1$ for $j>1$, $w_\text{add}=0.1,$ and $w_1=1.0$. The second baseline (``binary") treats all yearly networks as binary by setting all edge weights to one.

\begin{table}[htb]
\centering
  \centering
  \small
  \setlength{\tabcolsep}{3pt}
\begin{tabular}{clll}
\toprule
Method & Count Similarity & Duration Similarity \\ 
\midrule
\multirow{3}{*}{learned} & 
[\textbf{ri, hu}, ro, wn, ga],& [\textbf{ri, hu}, wn, ws, ro, ga], \\ 
& [cs, hi, mu], \textbf{[pe, ct]}, & [hi, mu, cs],\\ 
& \textbf{[rh, pi]}, [dx, mu], \textbf{[bt, pp]}
& \textbf{[pe, ct]}, \textbf{[rh, pi]}
\\ 
\hline
\multirow{3}{*}{unlearned} & 
[\textbf{ri, hu}, ro, ga, wn], \textbf{[pe, ct]}, 
 & [\textbf{ri, hu}, ro, ws, wn, ga], \\
&[ro, ri, ga, garbo], \textbf{[rh, pi]},
 & \textbf{[pe, ct]}, \textbf{[rh, pi]},\\
&[dx, mu], [ro, pi]&[dx, mu], [ro, pi]
\\
\hline
\multirow{2}{*}{binary} & 
[ws, hu], [mu, lo], [mu, sp], &[ws, hu], [mu, mg], \\
&[mu, cs], [mu, mg]&[mu, lo]\\
\bottomrule
\end{tabular}
\caption{Cliques detected by two notions of similarity on the chimpanzee networks for three network combination methods. Individuals are denoted by their codes. Known strong bonds are marked in bold.}
\label{tab:cliques}
\end{table}
For the full graphs as described by Eq.~\eqref{eq:adj_t}, Fig.~\ref{fig:thresholded_similarity} in App.~\ref{app_sec:extended_plots_and_tables} visualizes thresholded similarity graphs (keeping only significant entries) for learned, unlearned, and binary graphs, respectively, based on p-values and Bonferroni correction with a significance level of 0.05.

Based on the thresholded networks, we discover persistent cliques in Tab.~\ref{tab:cliques}. Cliques correspond to expectations based on qualitative observations over 10 years of observations on this population~\cite{mitani2009male}. Two maternal brothers appear in the same clique (ri and hu within a larger clique; pe and ct as a dyadic clique). Both the learned and the unlearned networks include those pairs in their significantly close dyads, but the binary networks fail to do so. These pairs are known to exhibit strong bonds (e.g., pe and ct were among the top grooming and proximity partners in a study with an independent data set collected during one year from 2014 to 2015, see \citet{sandel2020adolescent}). 

Overall, the results from the learned and unlearned networks are similar. In particular, we observe additional dyads, which, from long-term qualitative observations, were known to have a persistent “mentor-mentee” relationship. One pair includes an adult male (rh) and his biological father (pi); although chimpanzees do not appear to have kin recognition mechanisms for their biological fathers, there is evidence in this population that adolescent and young adult males preferentially groom and spend time in proximity to their biological fathers and other older “mentor” figures~\cite{sandel2020adolescent}. The other pair, which is only captured by the learned networks, involves an adult male, bt, who “adopted” pp, a younger male as a juvenile, and the two remained close in adulthood. Comparing the shapes of the enduring relationships from Fig.~\ref{fig:thresholded_similarity} in App.~\ref{app_sec:extended_plots_and_tables}, learned graphs typically produce fewer ``tails" and more actual ``cliques", resulting in more stable and cohesive bonds among individuals. In line with \citet{SexualSegregationCliquesandSocialPowerinSquirrelMonkeySaimiriGroups}, we conjecture that cliques may have an advantage, and perhaps individuals in these cliques may be key drivers of the social structure in the chimpanzee population.
\section{Conclusion and Future Work}
\label{sec:conclusion}
This paper develops a network representation and a subsequent analysis of a novel data set of chimpanzee interactions. To this purpose, it provides a novel optimization approach to combine proximity networks into a single network based on hierarchies of proximity levels. It also gives a principled way to identify long-term related nodes in network time series. 

We anticipate ProxFuse may also be useful for the analysis of similar human or animal interaction data sets.  For the similarity analysis, we plan to make the method more robust to randomness in the community assignments. We explore here potential structural drivers via a novel lens of significant long-term relationships. Further analysis, e.g., in terms of influential individuals or how these significant relationships drive structural changes, is left as future work. 
\section*{Acknowledgement}
The chimpanzee data set was collected with the approval of the Uganda Wildlife Authority, Uganda National Council for Science and Technology, and the Makerere University Biological Field Station. We thank members of the Ngogo Chimpanzee Project who provided support in the field, especially David Watts, Kevin Langergraber, Sam Angedakin and the late Jerry Lwanga. For support with data organization, we thank Veronika Städele. Research at Ngogo has been funded by: Arizona State University; Institute of Human Origins; Keo Films; L.S.B. Leakey Foundation; Max Planck Society; National Science Foundation (BCS-9253590, IOS-0516644, BSC-0850951, BCS-1613393, BSC-1850328, BCS- 1540259, BCS-0215622); National Institute on Aging (R01-AG049395); National Geographic Society; Wenner-Gren Foundation (including Gr. 9957); Silverback Films; Underdog Films; Wildstar Films; University of Michigan; and Yale University.

In addition, Gesine Reinert is funded in part by EPSRC grants EP/T018445/1, EP/V056883/1, EP/Y028872/1,  and EP/X002195/1.
\newpage
\bibliography{main_arxiv}

\begin{thebibliography}{53}
\providecommand{\natexlab}[1]{#1}

\bibitem[{Abraham et~al.(2015)Abraham, Chechik, Kempe, and Slivkins}]{abraham2015low}
Abraham, I.; Chechik, S.; Kempe, D.; and Slivkins, A. 2015.
\newblock Low-distortion inference of latent similarities from a multiplex social network.
\newblock \emph{SIAM Journal on Computing}, 44(3): 617--668.

\bibitem[{Altmann(1974)}]{altmann1974observational}
Altmann, J. 1974.
\newblock Observational study of behavior: sampling methods.
\newblock \emph{Behaviour}, 49(3-4): 227--266.

\bibitem[{Badihi et~al.(2022)Badihi, Bodden, Zuberb{\"u}hler, Samuni, and Hobaiter}]{badihi2022flexibility}
Badihi, G.; Bodden, K.; Zuberb{\"u}hler, K.; Samuni, L.; and Hobaiter, C. 2022.
\newblock Flexibility in the social structure of male chimpanzees (Pan troglodytes schweinfurthii) in the Budongo Forest, Uganda.
\newblock \emph{Royal Society Open Science}, 9(9): 220904.

\bibitem[{Brask, Silk, and Weiss(2024)}]{brask2024introduction}
Brask, J.~B.; Silk, M.; and Weiss, M.~N. 2024.
\newblock An introduction to generative network models and how they may be used to study animal sociality.
\newblock \emph{EcoEvoRxiv}.

\bibitem[{Casiraghi et~al.(2017)Casiraghi, Nanumyan, Scholtes, and Schweitzer}]{casiraghi2017relational}
Casiraghi, G.; Nanumyan, V.; Scholtes, I.; and Schweitzer, F. 2017.
\newblock From relational data to graphs: Inferring significant links using generalized hypergeometric ensembles.
\newblock In \emph{International conference on social informatics}, 111--120. Springer.

\bibitem[{{\c{C}}elikkanat, Nakis, and M{\o}rup(2024)}]{ccelikkanat2024continuous}
{\c{C}}elikkanat, A.; Nakis, N.; and M{\o}rup, M. 2024.
\newblock Continuous-time graph representation with sequential survival process.
\newblock In \emph{Proceedings of the AAAI Conference on Artificial Intelligence}, volume~38, 11177--11185.

\bibitem[{Derby et~al.(2024)Derby, Eckardt, Stoinski, Morrison, and Sandel}]{derby2024female}
Derby, R.~N.; Eckardt, W.; Stoinski, T.~S.; Morrison, R.~E.; and Sandel, A.~A. 2024.
\newblock Female mountain gorillas form enduring social relationships.
\newblock \emph{Animal Behaviour}, 213: 139--147.

\bibitem[{Escribano et~al.(2023)Escribano, Lapuente, Cuesta, Dunbar, and S{\'a}nchez}]{escribano2023stability}
Escribano, D.; Lapuente, F.~J.; Cuesta, J.~A.; Dunbar, R.~I.; and S{\'a}nchez, A. 2023.
\newblock Stability of the personal relationship networks in a longitudinal study of middle school students.
\newblock \emph{Scientific Reports}, 13(1): 14575.

\bibitem[{Farine and Whitehead(2015)}]{farine2015constructing}
Farine, D.~R.; and Whitehead, H. 2015.
\newblock Constructing, conducting and interpreting animal social network analysis.
\newblock \emph{Journal of Animal Ecology}, 84(5): 1144--1163.

\bibitem[{Franz, Altmann, and Alberts(2015)}]{franz2015knockouts}
Franz, M.; Altmann, J.; and Alberts, S.~C. 2015.
\newblock Knockouts of high-ranking males have limited impact on baboon social networks.
\newblock \emph{Current Zoology}, 61(1): 107--113.

\bibitem[{G{\"u}ne{\c{s}}, G{\"u}nd{\"u}z-{\"O}{\u{g}}{\"u}d{\"u}c{\"u}, and {\c{C}}ataltepe(2016)}]{gunecs2016link}
G{\"u}ne{\c{s}}, {\.I}.; G{\"u}nd{\"u}z-{\"O}{\u{g}}{\"u}d{\"u}c{\"u}, {\c{S}}.; and {\c{C}}ataltepe, Z. 2016.
\newblock Link prediction using time series of neighborhood-based node similarity scores.
\newblock \emph{Data Mining and Knowledge Discovery}, 30: 147--180.

\bibitem[{Haddad et~al.(2019)Haddad, Bothorel, Lenca, and Bedart}]{haddad2019temporalnode2vec}
Haddad, M.; Bothorel, C.; Lenca, P.; and Bedart, D. 2019.
\newblock Temporalnode2vec: Temporal node embedding in temporal networks.
\newblock In \emph{International Conference on Complex Networks and Their Applications}, 891--902. Springer.

\bibitem[{Han et~al.(2021)Han, Wang, Yao, Shang, and Zhang}]{han2021graph}
Han, P.; Wang, J.; Yao, D.; Shang, S.; and Zhang, X. 2021.
\newblock A graph-based approach for trajectory similarity computation in spatial networks.
\newblock In \emph{Proceedings of the 27th ACM SIGKDD conference on knowledge discovery \& data mining}, 556--564.

\bibitem[{He et~al.(2024)He, Reinert, Wipf, and Cucuringu}]{he2024robust}
He, Y.; Reinert, G.; Wipf, D.; and Cucuringu, M. 2024.
\newblock Robust Angular Synchronization via Directed Graph Neural Networks.
\newblock In \emph{The Twelfth International Conference on Learning Representations}.

\bibitem[{Hobson et~al.(2021)Hobson, Silk, Fefferman, Larremore, Rombach, Shai, and Pinter-Wollman}]{hobson2021guide}
Hobson, E.~A.; Silk, M.~J.; Fefferman, N.~H.; Larremore, D.~B.; Rombach, P.; Shai, S.; and Pinter-Wollman, N. 2021.
\newblock A guide to choosing and implementing reference models for social network analysis.
\newblock \emph{Biological Reviews}, 96(6): 2716--2734.

\bibitem[{Hu et~al.(2022)Hu, Li, Zhou, Kawai, Fueda, Qian, and Wang}]{hu2022spatio}
Hu, W.; Li, W.; Zhou, X.; Kawai, A.; Fueda, K.; Qian, Q.; and Wang, J. 2022.
\newblock Spatio-temporal graph convolutional networks via view fusion for trajectory data analytics.
\newblock \emph{IEEE Transactions on Intelligent Transportation Systems}, 24(4): 4608--4620.

\bibitem[{Kang et~al.(2020)Kang, Shi, Huang, Chen, Pu, Zhou, and Xu}]{kang2020multi}
Kang, Z.; Shi, G.; Huang, S.; Chen, W.; Pu, X.; Zhou, J.~T.; and Xu, Z. 2020.
\newblock Multi-graph fusion for multi-view spectral clustering.
\newblock \emph{Knowledge-Based Systems}, 189: 105102.

\bibitem[{Kingma and Ba(2014)}]{kingma2014adam}
Kingma, D.~P.; and Ba, J. 2014.
\newblock Adam: A method for stochastic optimization.
\newblock \emph{arXiv preprint arXiv:1412.6980}.

\bibitem[{Kivel{\"a} et~al.(2014)Kivel{\"a}, Arenas, Barthelemy, Gleeson, Moreno, and Porter}]{kivela2014multilayer}
Kivel{\"a}, M.; Arenas, A.; Barthelemy, M.; Gleeson, J.~P.; Moreno, Y.; and Porter, M.~A. 2014.
\newblock Multilayer networks.
\newblock \emph{Journal of Complex Networks}, 2(3): 203--271.

\bibitem[{Langergraber et~al.(2017)Langergraber, Watts, Vigilant, and Mitani}]{langergraber2017group}
Langergraber, K.~E.; Watts, D.~P.; Vigilant, L.; and Mitani, J.~C. 2017.
\newblock Group augmentation, collective action, and territorial boundary patrols by male chimpanzees.
\newblock \emph{Proceedings of the National Academy of Sciences}, 114(28): 7337--7342.

\bibitem[{Leger, Mason, and Fragaszy(1981)}]{SexualSegregationCliquesandSocialPowerinSquirrelMonkeySaimiriGroups}
Leger, D.~W.; Mason, W.~A.; and Fragaszy, D.~M. 1981.
\newblock Sexual Segregation, Cliques, and Social Power in Squirrel Monkey ({S}aimiri) Groups.
\newblock \emph{Behaviour}, 76(3-4).

\bibitem[{Leifeld and Cranmer(2019)}]{leifeld2019theoretical}
Leifeld, P.; and Cranmer, S.~J. 2019.
\newblock A theoretical and empirical comparison of the temporal exponential random graph model and the stochastic actor-oriented model.
\newblock \emph{Network science}, 7(1): 20--51.

\bibitem[{Lerch et~al.(2021)Lerch, Abbott, Archie, and Alberts}]{lerch2021better}
Lerch, B.~A.; Abbott, K.~C.; Archie, E.~A.; and Alberts, S.~C. 2021.
\newblock Better baboon break-ups: collective decision theory of complex social network fissions.
\newblock \emph{Proceedings of the Royal Society B}, 288(1964): 20212060.

\bibitem[{Lusseau et~al.(2003)Lusseau, Schneider, Boisseau, Haase, Slooten, and Dawson}]{lusseau2003bottlenose}
Lusseau, D.; Schneider, K.; Boisseau, O.~J.; Haase, P.; Slooten, E.; and Dawson, S.~M. 2003.
\newblock The bottlenose dolphin community of doubtful sound features a large proportion of long-lasting associations: can geographic isolation explain this unique trait?
\newblock \emph{Behavioral ecology and sociobiology}, 54: 396--405.

\bibitem[{Mazza, Cola, and Tesconi(2023)}]{mazza2023modularity}
Mazza, M.; Cola, G.; and Tesconi, M. 2023.
\newblock Modularity-based approach for tracking communities in dynamic social networks.
\newblock \emph{Knowledge-Based Systems}, 281: 111067.

\bibitem[{Meng et~al.(2018)Meng, Rui, Wang, Xing, and Cao}]{meng2018coupled}
Meng, F.; Rui, X.; Wang, Z.; Xing, Y.; and Cao, L. 2018.
\newblock Coupled node similarity learning for community detection in attributed networks.
\newblock \emph{Entropy}, 20(6): 471.

\bibitem[{Mitani and Amsler(2003)}]{mitani2003social}
Mitani, J.; and Amsler, S. 2003.
\newblock Social and spatial aspects of male subgrouping in a community of wild chimpanzees.
\newblock \emph{Behaviour}, 140(7): 869--884.

\bibitem[{Mitani(2009)}]{mitani2009male}
Mitani, J.~C. 2009.
\newblock Male chimpanzees form enduring and equitable social bonds.
\newblock \emph{Animal Behaviour}, 77(3): 633--640.

\bibitem[{Morrison et~al.(2021)Morrison, Mushimiyimana, Stoinski, and Eckardt}]{morrison2021rapid}
Morrison, R.~E.; Mushimiyimana, Y.; Stoinski, T.~S.; and Eckardt, W. 2021.
\newblock Rapid transmission of respiratory infections within but not between mountain gorilla groups.
\newblock \emph{Scientific Reports}, 11(1): 19622.

\bibitem[{Mucha et~al.(2010)Mucha, Richardson, Macon, Porter, and Onnela}]{mucha2010community}
Mucha, P.~J.; Richardson, T.; Macon, K.; Porter, M.~A.; and Onnela, J.-P. 2010.
\newblock Community structure in time-dependent, multiscale, and multiplex networks.
\newblock \emph{science}, 328(5980): 876--878.

\bibitem[{Newman(2018{\natexlab{a}})}]{newman2018networks}
Newman, M. 2018{\natexlab{a}}.
\newblock \emph{Networks}.
\newblock Oxford University Press.

\bibitem[{Newman(2018{\natexlab{b}})}]{newman2018network}
Newman, M.~E. 2018{\natexlab{b}}.
\newblock Network structure from rich but noisy data.
\newblock \emph{Nature Physics}, 14(6): 542--545.

\bibitem[{Pepper, Mitani, and Watts(1999)}]{pepper1999general}
Pepper, J.~W.; Mitani, J.~C.; and Watts, D.~P. 1999.
\newblock General gregariousness and specific social preferences among wild chimpanzees.
\newblock \emph{International Journal of Primatology}, 20: 613--632.

\bibitem[{Pinter-Wollman et~al.(2014)Pinter-Wollman, Hobson, Smith, Edelman, Shizuka, De~Silva, Waters, Prager, Sasaki, Wittemyer et~al.}]{pinter2014dynamics}
Pinter-Wollman, N.; Hobson, E.~A.; Smith, J.~E.; Edelman, A.~J.; Shizuka, D.; De~Silva, S.; Waters, J.~S.; Prager, S.~D.; Sasaki, T.; Wittemyer, G.; et~al. 2014.
\newblock The dynamics of animal social networks: analytical, conceptual, and theoretical advances.
\newblock \emph{Behavioral Ecology}, 25(2): 242--255.

\bibitem[{Psorakis et~al.(2012)Psorakis, Roberts, Rezek, and Sheldon}]{psorakis2012inferring}
Psorakis, I.; Roberts, S.~J.; Rezek, I.; and Sheldon, B.~C. 2012.
\newblock Inferring social network structure in ecological systems from spatio-temporal data streams.
\newblock \emph{Journal of the Royal Society Interface}, 9(76): 3055--3066.

\bibitem[{Qin et~al.(2019)Qin, Li, Wang, Qin, Cheng, and Yuan}]{qin2019mining}
Qin, H.; Li, R.-H.; Wang, G.; Qin, L.; Cheng, Y.; and Yuan, Y. 2019.
\newblock Mining periodic cliques in temporal networks.
\newblock In \emph{2019 IEEE 35th International Conference on Data Engineering (ICDE)}, 1130--1141. IEEE.

\bibitem[{Rabbat, Figueiredo, and Nowak(2008)}]{rabbat2008network}
Rabbat, M.~G.; Figueiredo, M.~A.; and Nowak, R.~D. 2008.
\newblock Network inference from co-occurrences.
\newblock \emph{IEEE Transactions on Information Theory}, 54(9): 4053--4068.

\bibitem[{Ramos-Fernandez et~al.(2018)Ramos-Fernandez, King, Beehner, Bergman, Crofoot, Di~Fiore, Lehmann, Schaffner, Snyder-Mackler, Zuberb{\"u}hler et~al.}]{ramos2018quantifying}
Ramos-Fernandez, G.; King, A.~J.; Beehner, J.~C.; Bergman, T.~J.; Crofoot, M.~C.; Di~Fiore, A.; Lehmann, J.; Schaffner, C.~M.; Snyder-Mackler, N.; Zuberb{\"u}hler, K.; et~al. 2018.
\newblock Quantifying uncertainty due to fission--fusion dynamics as a component of social complexity.
\newblock \emph{Proceedings of the Royal Society B: Biological Sciences}, 285(1879): 20180532.

\bibitem[{Roehner(2007)}]{roehner2007driving}
Roehner, B.~M. 2007.
\newblock \emph{Driving forces in physical, biological and socio-economic phenomena: a network science investigation of social bonds and interactions}.
\newblock Cambridge University Press.

\bibitem[{Rozemberczki et~al.(2021)Rozemberczki, Scherer, He, Panagopoulos, Riedel, Astefanoaei, Kiss, Beres, L\'{o}pez, Collignon, and Sarkar}]{rozemberczki2021pytorch}
Rozemberczki, B.; Scherer, P.; He, Y.; Panagopoulos, G.; Riedel, A.; Astefanoaei, M.; Kiss, O.; Beres, F.; L\'{o}pez, G.; Collignon, N.; and Sarkar, R. 2021.
\newblock {PyTorch Geometric Temporal: Spatiotemporal Signal Processing with Neural Machine Learning Models}.
\newblock In \emph{Proceedings of the 30th ACM International Conference on Information and Knowledge Management}, CIKM '21, 4564–4573. New York, NY, USA: Association for Computing Machinery.
\newblock ISBN 9781450384469.

\bibitem[{Sandel, Langergraber, and Mitani(2020)}]{sandel2020adolescent}
Sandel, A.~A.; Langergraber, K.~E.; and Mitani, J.~C. 2020.
\newblock Adolescent male chimpanzees (Pan troglodytes) form social bonds with their brothers and others during the transition to adulthood.
\newblock \emph{American Journal of Primatology}, 82(1): e23091.

\bibitem[{Sekara and Lehmann(2014)}]{sekara2014strength}
Sekara, V.; and Lehmann, S. 2014.
\newblock The strength of friendship ties in proximity sensor data.
\newblock \emph{PloS one}, 9(7): e100915.

\bibitem[{Silk et~al.(2014)Silk, Croft, Tregenza, and Bearhop}]{silk2014importance}
Silk, M.~J.; Croft, D.~P.; Tregenza, T.; and Bearhop, S. 2014.
\newblock The importance of fission--fusion social group dynamics in birds.
\newblock \emph{Ibis}, 156(4): 701--715.

\bibitem[{Stopczynski, Pentland, and Lehmann(2018)}]{stopczynski2018physical}
Stopczynski, A.; Pentland, A.~S.; and Lehmann, S. 2018.
\newblock How physical proximity shapes complex social networks.
\newblock \emph{Scientific reports}, 8(1): 17722.

\bibitem[{Traag, Waltman, and Van~Eck(2019)}]{traag2019louvain}
Traag, V.~A.; Waltman, L.; and Van~Eck, N.~J. 2019.
\newblock From {L}ouvain to {L}eiden: {G}uaranteeing Well-Connected Communities.
\newblock \emph{Scientific Reports}, 9(1): 1--12.

\bibitem[{VanderWeele and Mathur(2019)}]{vanderweele2019some}
VanderWeele, T.~J.; and Mathur, M.~B. 2019.
\newblock Some desirable properties of the {B}onferroni correction: is the {B}onferroni correction really so bad?
\newblock \emph{American Journal of Epidemiology}, 188(3): 617--618.

\bibitem[{Wasserman and Faust(1994)}]{wasserman1994social}
Wasserman, S.; and Faust, K. 1994.
\newblock \emph{Social {N}etwork {A}nalysis: Methods and {A}pplications}.
\newblock Cambridge University Press.

\bibitem[{Yang, Huang, and Li(2019)}]{yang2019time}
Yang, B.; Huang, T.; and Li, X. 2019.
\newblock A time-series approach to measuring node similarity in networks and its application to community detection.
\newblock \emph{Physics Letters A}, 383(30): 125870.

\bibitem[{Yang et~al.(2019)Yang, Shen, Hu, Jing, and Li}]{yang2019adaptive}
Yang, L.; Shen, C.; Hu, Q.; Jing, L.; and Li, Y. 2019.
\newblock Adaptive sample-level graph combination for partial multiview clustering.
\newblock \emph{IEEE Transactions on Image Processing}, 29: 2780--2794.

\bibitem[{Yang et~al.(2024)Yang, Zhu, Wu, Wang, Liu, and Nie}]{yang2024bidirectional}
Yang, X.; Zhu, T.; Wu, D.; Wang, P.; Liu, Y.; and Nie, F. 2024.
\newblock Bidirectional Fusion With Cross-View Graph Filter for Multi-View Clustering.
\newblock \emph{IEEE Transactions on Knowledge and Data Engineering}.

\bibitem[{Zhang et~al.(2021)Zhang, Li, Wang, Yang, and Wei}]{zhang2021adaptive}
Zhang, Y.; Li, Y.; Wang, Y.; Yang, Y.; and Wei, X. 2021.
\newblock Adaptive spatio-temporal graph information fusion for remaining useful life prediction.
\newblock \emph{IEEE Sensors Journal}, 22(4): 3334--3347.

\bibitem[{Zhao et~al.(2019)Zhao, Song, Zhang, Liu, Wang, Lin, Deng, and Li}]{zhao2019t}
Zhao, L.; Song, Y.; Zhang, C.; Liu, Y.; Wang, P.; Lin, T.; Deng, M.; and Li, H. 2019.
\newblock T-GCN: A Temporal Graph Convolutional Network for Traffic Prediction.
\newblock \emph{IEEE transactions on intelligent transportation systems}, 21(9): 3848--3858.

\bibitem[{Zhou et~al.(2020)Zhou, Cui, Hu, Zhang, Yang, Liu, Wang, Li, and Sun}]{Zhou}
Zhou, J.; Cui, G.; Hu, S.; Zhang, Z.; Yang, C.; Liu, Z.; Wang, L.; Li, C.; and Sun, M. 2020.
\newblock {Graph Neural Networks: A review of methods and applications}.
\newblock \emph{AI Open}, 1: 57--81.

\end{thebibliography}
\bibliographystyle{arxiv_style}

\newpage
\appendix
\onecolumn
\section{Proximity Types}
\label{app_sec:proximity_types}
\begin{table*}[h]
    \centering
    \begin{tabular}{c|c|c|c|c}
    \toprule
        Individual 1/Individual 2 & Party&Prox5&Prox2 (including grooming)&Focal  \\
        \midrule
        Party &Type 1&Type 2&Type 3&Type 4\\
        \hline
        Prox5 &Type 2&Type 5&Type 6&Type 7\\ 
        \hline
        Prox2 (including grooming) &Type 3&Type 6&Type 8&Type 9 or Type 10 (if grooming)\\
        \hline
        Focal &Type 4&Type 7&Type 9 or Type 10 (if grooming)&Impossible\\ 
        \bottomrule
    \end{tabular}
    \caption{Relationships to the focal for two different individuals, and the resulting type of interactions for this pair of individuals.}
    \label{tab:proximity_types}
\end{table*}
We use the following proximity types based on the level of proximity two individuals have at a certain recorded time (examples are from Fig.~\ref{fig:proximity_types} in the main text), with an illustration table in Table~\ref{tab:proximity_types}:
\begin{itemize}
    \item Type 1: Two individuals are both in the ``party" to a focal male (subject of observation), but not observed in ``prox2" or ``prox5". E.g., J \& K. In particular, two individuals are roughly within 200m of each other, although they may be much closer than that.
    \item Type 2: One individual is in ``prox5" to the focal, and the other in ``party", but not observed in ``prox2" or ``prox5". E.g., H \& K. Thus, they must be within roughly 105m of each other.
    \item Type 3:
    One individual is in ``prox2" to the focal, and the other individual is in ``party". E.g., A \& I. Thus, they must be within roughly 102m of each other.
    \item Type 4: Subject of observation (focal) + an individual in its ``party", but not observed in 2m/5m proximity or grooming. E.g., F \& K. Thus, they must be within roughly 100m of each other.
    \item Type 5: Two individuals are both in ``prox5" to the focal. E.g., G \& H. Thus, they must be within roughly 10m of each other.
    \item Type 6: One individual is in ``prox2" to the focal and the other is in ``prox5". E.g., A \& H. Thus, they must be within roughly 7m of each other.
    \item Type 7: Subject of observation (focal) + an individual in its ``prox5". E.g., F \& E. Thus, they must be within roughly 2-5m of each other.
    \item Type 8: Two individuals are both in ``prox2" to the focal. E.g., A \& B. Thus, they must be within roughly 4m of each other.
    \item Type 9: Subject of observation (focal) + an individual in its ``prox2" but not grooming. E.g., F \& A. Thus, they must be within roughly 2m of each other.
    \item Type 10 (within 2m but semantically closer than type 9): Subject of observation (focal) + a grooming individual in its ``prox2". E.g., F \& B. Thus, as for Type 9, they must be within roughly 2m of each other; for Type 10, they must be grooming in addition to being within roughly 2m apart.
\end{itemize}

\section{Theorems and Proofs}\label{app:proofs}
In this section, we provide theoretical results for the two novel notions of node similarity, with detailed proofs. All probability formulas have also been empirically validated to have probabilities add up to one for all cases we consider.
\subsection{Theorem and Proof of Count Similarity}
Here, we provide a theorem and the proof relating to count similarity. We note that the distribution of $C_t$ is also called a {\it Poisson-binomial distribution.} In general, there is no closed form available for its probability mass function.

\begin{theorem}
\label{thm:count_successes_distribution}
[Useful for Count Similarity] For a 
series of independent Bernoulli trials $\{B_t\}$ of length $T\;(T\geq 2)$, $t\in \{1, \dots, T\},$ with success rate $p_t$ for $B_t,$ denote $C_t \in \{0, \dots, t\}$ as the total number of successes in time steps $\{ 1, \dots, t\}$. 
For $t\in\{1, \dots, T\}$, the probability distribution of $C_t$ satisfies
\begin{align}
\mathbb{P}(C_t=0) &=\prod_{s=1}^t(1-p_s), \;\;
 \mathbb{P}(C_t=t) =\prod_{s=1}^t p_s,
 \end{align}
 and for $L\in\{1, \dots, t-1\}$ if $t\geq 2,$
 \begin{align}
 \mathbb{P}(C_t=L) &=p_t\cdot\mathbb{P}(C_{t-1}=L-1) + (1-p_t)\cdot \mathbb{P}(C_{t-1}=L);
    \end{align}
    for $ L\geq t+1,$
    \begin{align}
    \mathbb{P}(C_t=L)&=0.
\end{align}
\end{theorem}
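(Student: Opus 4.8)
The plan is to prove the recursion by conditioning on the outcome of the final Bernoulli trial $B_t$ and invoking independence. First I would dispense with the boundary cases. For $C_t = 0$, the event is precisely that every trial $B_1, \dots, B_t$ fails; by independence this probability is $\prod_{s=1}^t (1-p_s)$. Symmetrically, $C_t = t$ requires all $t$ trials to succeed, giving $\prod_{s=1}^t p_s$. The case $L \geq t+1$ is immediate since $C_t$ can take at most the value $t$ (it is a sum of $t$ indicator variables), so $\mathbb{P}(C_t = L) = 0$ there. None of these requires more than a sentence.

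The heart of the argument is the recursion for $L \in \{1, \dots, t-1\}$ when $t \geq 2$. Here I would write $C_t = C_{t-1} + B_t$, where $C_{t-1}$ counts successes among the first $t-1$ trials and $B_t$ is the indicator of success at time $t$. Since $B_t$ is independent of $B_1, \dots, B_{t-1}$, it is independent of $C_{t-1}$. By the law of total probability, conditioning on $B_t$,
\begin{align}
\mathbb{P}(C_t = L) &= \mathbb{P}(B_t = 1)\,\mathbb{P}(C_{t-1} = L - 1 \mid B_t = 1) \notag \\
&\quad + \mathbb{P}(B_t = 0)\,\mathbb{P}(C_{t-1} = L \mid B_t = 0) \notag \\
&= p_t \,\mathbb{P}(C_{t-1} = L-1) + (1-p_t)\,\mathbb{P}(C_{t-1} = L),
\end{align}
where the second equality uses independence to drop the conditioning. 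This is exactly the claimed identity. One should note that the terms on the right-hand side are well-defined: since $L \in \{1, \dots, t-1\}$, both $L-1 \in \{0, \dots, t-2\}$ and $L \in \{1, \dots, t-1\}$ lie in the admissible range $\{0, \dots, t-1\}$ for $C_{t-1}$, so no out-of-range probabilities (which would anyway be zero by the last case of the theorem) are silently invoked.

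I do not anticipate a genuine obstacle here; the result is essentially the standard derivation of the Poisson-binomial mass function via a convolution-style recursion, and the only thing to be careful about is bookkeeping on the index ranges so that the recursion closes consistently with the stated boundary cases. If one wanted to be fully rigorous about the base of the recursion, one could additionally check that for $t = 2$ the formula reproduces $\mathbb{P}(C_2 = 1) = p_2(1-p_1) + (1-p_2)p_1$, matching direct enumeration, though this is subsumed by the general argument. The empirical check mentioned in the paper — that the resulting masses sum to one — serves as an independent sanity confirmation rather than part of the proof.
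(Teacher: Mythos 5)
Your proof is correct and follows essentially the same route as the paper's: both handle the boundary cases by noting that all-success or all-failure is forced, and both obtain the recursion by conditioning on the outcome of the final trial $B_t$ and using its independence from $C_{t-1}$. Your explicit remark on the admissible index ranges is a small but welcome addition of rigor over the paper's version.
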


\begin{proof}

We first prove the behavior at the boundaries.

Since the total number of successes is bounded by the number of trials, we always have $C_t\leq t.$ Therefore, $\mathbb{P}(C_t=L)=0$ whenever $ L\geq t+1.$ The only chance of having pure successes is to never fail the trial, and hence $\mathbb{P}(C_t=t) =\prod_{s=1}^t p_s, \;\; t\in\{1, \dots, T\},$ due to the independence between the Bernoulli random variables. Similarly, the only chance of having zero successes is to fail the trial every time, and hence $\mathbb{P}(C_t=0) =\prod_{s=1}^t(1-p_s), \;\; t\in\{1, \dots, T\}$.

In order to compute $\mathbb{P}(C_t=L)$ for $t\in\{2, \dots, T\},\;L\in\{1, \dots, t-1\},$ we first analyze the role of this time step $t$. There are two possibilities for time step $t$: either it is a success, or it is a ``failure" point. Note that we concentrate on the time series from the start (time step $1$) till time step $t,$ and we impose no constraints on future time steps. For the first situation, we require that we have $L-1$ successes before $t;$ while for the second situation, we require $L$ successes at time $t-1.$ Therefore, 
\begin{align*}
    \mathbb{P}(C_t=L) &=p_t\cdot\mathbb{P}(C_{t-1}=L-1)+ (1-p_t)\cdot \mathbb{P}(C_{t-1}=L), \\
    & \qquad \qquad \;\;t\in\{2, \dots, T\},\;L\in\{1, \dots, t-1\}.
\end{align*}
Combing the above, we have that 
\begin{align*}
\begin{split}
    \mathbb{P}(C_t=t) &=\prod_{s=1}^t p_s, \;\; t\in\{1, \dots, T\},\\
    \mathbb{P}(C_t=0) &=\prod_{s=1}^t(1-p_s), \;\; t\in\{1, \dots, T\},\\
    \mathbb{P}(C_t=L) &=p_t\cdot\mathbb{P}(C_{t-1}=L-1)+ (1-p_t)\cdot \mathbb{P}(C_{t-1}=L),\\
    & \qquad \qquad  \;\;t\in\{2, \dots, T\},\;L\in\{1, \dots, t-1\},\\
    \mathbb{P}(C_t=L)&=0,\;\;t\in\{1, \dots, T-1\}, \; L\in\{t+1, \dots, T\}.
\end{split}
\end{align*}
\end{proof}
\subsection{Theorem and Proof for Duration Similarity}
The next theoretical result gives a means for assessing significant duration similarity.
\begin{theorem}
\label{thm:longest_consecutive_successes_distribution}
[Useful for Duration Similarity] For a time series of independent Bernoulli trials $\{B_t\}$ of length $T\;(T\geq 2)$,  $t\in\{1, \dots, T\},$ with success rate $p_t$ for $B_t,$ denote $D_t \in \{0, \dots, t\}$ as the longest consecutive successes from the start until time $t.$  The probability distribution satisfies for $t\in\{1, \dots, T\}$,
\begin{align}
    \mathbb{P}(D_t=L)&=0,\;
    L\in\{t+1, \dots, T\},\\
    \mathbb{P}(D_t=t) &=\prod_{s=1}^t p_s, \quad \quad 
    \mathbb{P}(D_t=0) =\prod_{s=1}^t(1-p_s);
    \end{align}
    for $t\in\{2, \dots, T\},$
    \begin{align}
    \mathbb{P}(D_t=t-1)
    &=(1-p_t)\cdot\prod_{s=1}^{t-1} p_s +(1-p_1)\cdot \prod_{s=2}^{t} p_s,
    \end{align}
    for $t\in\{3, \dots, T\}$ if $T\geq 3$,
    \begin{align}
    \mathbb{P}(D_t=1) &=\left[\sum_{l=0}^{1}\mathbb{P}(D_{t-2}=l)\right]\cdot(1-p_{t-1})\cdot p_t \nonumber\\
    &+ (1-p_t)\cdot \mathbb{P}(D_{t-1}=1);
    \end{align}
    and for $t\in\{4, \dots, T\}$ and $L\in\{2, \dots, t-2\}$ if $T\geq 4$,
    \begin{align}
    \mathbb{P}(D_t=L) =& \left[\sum_{l=0}^{L}\mathbb{P}(D_{t-L-1}=l)\right]\cdot
   (1-p_{t-L})\cdot\left(\prod_{s=t-L+1}^t p_s\right) \nonumber\\
    &+ \sum_{s=t-L+1}^{t-1}(1-p_s)
    \left(\prod_{k=s+1}^t p_k\right)\cdot
    \mathbb{P}(D_{s-1}=L)\\
    &+ (1-p_t)
    \mathbb{P}(D_{t-1}=L).
\end{align}
\end{theorem}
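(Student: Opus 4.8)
\textbf{Proof proposal for Theorem~\ref{thm:longest_consecutive_successes_distribution}.}

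The plan is to condition on the location of the \emph{last failure at or before time $t$}, which is the natural analogue of the single-step conditioning used in the proof of Thm.~\ref{thm:count_successes_distribution}, but adapted to the longest-run statistic. First I would dispose of the boundary cases exactly as before: $\mathbb{P}(D_t=L)=0$ for $L\geq t+1$ since $D_t\le t$; $\mathbb{P}(D_t=t)=\prod_{s=1}^t p_s$ is the all-success event; and $\mathbb{P}(D_t=0)=\prod_{s=1}^t(1-p_s)$ is the all-failure event, all by independence. For $D_t=t-1$ the only way to have a run of length exactly $t-1$ among $t$ trials is to fail exactly once, and that single failure must be at the very start or the very end (a failure in the interior would split the trials into two runs each shorter than $t-1$); this gives the two-term formula $(1-p_t)\prod_{s=1}^{t-1}p_s + (1-p_1)\prod_{s=2}^t p_s$.

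For the generic case $2\le L\le t-2$ (and the special case $L=1$), I would partition the event $\{D_t=L\}$ according to whether time step $t$ is a success or a failure, and if a success, how long the current trailing run is. Concretely: (i) if $B_t=0$, then the prefix $B_1,\dots,B_{t-1}$ must already have $D_{t-1}=L$, contributing $(1-p_t)\,\mathbb{P}(D_{t-1}=L)$; (ii) if the trailing run at time $t$ has length exactly $k$ with $1\le k\le L-1$, then $B_{t-k}=0$ while $B_{t-k+1},\dots,B_t$ are all successes, and the earlier prefix $B_1,\dots,B_{t-k-1}$ must have $D_{t-k-1}=L$ — summing over $k$ and reindexing $s=t-k$ gives the middle term $\sum_{s=t-L+1}^{t-1}(1-p_s)\bigl(\prod_{k=s+1}^t p_k\bigr)\mathbb{P}(D_{s-1}=L)$; (iii) if the trailing run has length exactly $L$, then $B_{t-L}=0$, $B_{t-L+1},\dots,B_t$ are successes, and the prefix $B_1,\dots,B_{t-L-1}$ just needs $D_{t-L-1}\le L$, i.e. the run length there is at most $L$, contributing $\bigl[\sum_{l=0}^L \mathbb{P}(D_{t-L-1}=l)\bigr](1-p_{t-L})\bigl(\prod_{s=t-L+1}^t p_s\bigr)$. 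A trailing run longer than $L$ is impossible since it would force $D_t>L$. These three contributions are disjoint and exhaustive, so they sum to $\mathbb{P}(D_t=L)$. The $L=1$ formula is the same decomposition with the middle sum (which ranges over an empty index set when $L=1$) dropped and the prefix requirement $D_{t-2}\le 1$ written out as $\sum_{l=0}^1\mathbb{P}(D_{t-2}=l)$.

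The main obstacle is bookkeeping rather than conceptual: one must verify that the index ranges are exactly right — in particular that case (iii) requires $t-L-1\ge 1$ (hence the restriction $L\le t-2$) and that the prefix-conditioning events genuinely only constrain the run within $B_1,\dots,B_{t-L-1}$ and are independent of the block $B_{t-L},\dots,B_t$ by the independence of the trials; the failure at position $t-L$ is precisely what ``shields'' the prefix from the trailing block, ensuring no longer run is created by concatenation. I would also double-check the edge alignment of the middle sum, namely that when the trailing run has length $k<L$ the failure at $t-k$ combined with $D_{s-1}=L$ (where $s=t-k$) correctly records that the maximum is still attained in the prefix and not exceeded. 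Once these range conditions are pinned down, each case is an immediate application of independence, and assembling them yields the stated recursion.
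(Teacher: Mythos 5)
Your proposal is correct and follows essentially the same route as the paper: identical boundary arguments, and for the generic case the same three-way decomposition (the paper phrases it as conditioning on the position of the last failure $s$ at or before $t$, which is just the reparametrization $s=t-k$ of your conditioning on the trailing success-run length $k$), with the same use of the failure at $t-L$ (resp.\ $s$) to decouple the prefix from the trailing block via independence. No gaps; the range bookkeeping you flag ($t-L-1\geq 1$, the empty middle sum when $L=1$) matches the paper's treatment.
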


To prove Thm.~\ref{thm:longest_consecutive_successes_distribution} we first show the following result. 

\begin{proposition}\label{prop:independence_M_B}
For a time series of independent Bernoulli trials $\{B_t\}$ of length $T\;(T\geq 2)$, $t\in\{1, \dots, T\},$ with success rate $p_t$ for $B_t,$ denote $D_t$ as the longest consecutive successes from the start until time $t,$ then $D_t$ takes values from $\{0, \dots, t\}.$  Further, $D_t$ and $B_s$ are independent if $s>t.$
\end{proposition}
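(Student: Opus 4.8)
The plan is to reduce the statement to the elementary fact that any function of one block of mutually independent random variables is independent of every random variable lying outside that block. First I would dispose of the range claim: a run of consecutive successes contained in the window $\{1,\dots,t\}$ has length at most $t$ (the whole window) and at least $0$ (when every trial in the window fails), so $D_t$ is indeed $\{0,\dots,t\}$-valued, with no appeal to independence needed.

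For the independence claim, the key observation is that $D_t$ is a deterministic, measurable function of the vector $(B_1,\dots,B_t)$ only: writing $D_t = g(B_1,\dots,B_t)$, where $g(\beta_1,\dots,\beta_t)$ returns the largest $\ell$ for which some block $\beta_a=\beta_{a+1}=\dots=\beta_{a+\ell-1}=1$ fits inside $\{1,\dots,t\}$ (and $g=0$ if all $\beta_r=0$), one sees that $g$ never references any $B_s$ with $s>t$. Hence $D_t$ is $\sigma(B_1,\dots,B_t)$-measurable. Since the $B_t$ are mutually independent, $\sigma(B_1,\dots,B_t)$ and $\sigma(B_s)$ are independent $\sigma$-algebras whenever $s>t$, and therefore $D_t$ and $B_s$ are independent.

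If a self-contained computation is preferred, I would instead check directly that for every $\ell\in\{0,\dots,t\}$ and every $b\in\{0,1\}$,
\begin{align*}
\mathbb{P}(D_t=\ell,\,B_s=b) &= \sum_{\beta\,:\,g(\beta)=\ell}\mathbb{P}(B_1=\beta_1,\dots,B_t=\beta_t,\,B_s=b)\\
&= \mathbb{P}(B_s=b)\sum_{\beta\,:\,g(\beta)=\ell}\prod_{r=1}^{t}\mathbb{P}(B_r=\beta_r) = \mathbb{P}(B_s=b)\,\mathbb{P}(D_t=\ell),
\end{align*}
where the middle equality uses that $s\notin\{1,\dots,t\}$ together with the mutual independence of the $B_r$; this is exactly the factorization defining independence of $D_t$ and $B_s$.

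There is essentially no obstacle here; the only point requiring care is to pin down the definition of $D_t$ so that it manifestly depends on $B_1,\dots,B_t$ alone, which is precisely what makes the block-independence argument valid. This proposition is then invoked in the proof of Thm.~\ref{thm:longest_consecutive_successes_distribution} to justify conditioning on the location of the most recent failure (equivalently, on the configuration of the last few trials) without disturbing the distribution of the run length accumulated strictly earlier.
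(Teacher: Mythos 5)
Your proof is correct and follows essentially the same route as the paper's: both arguments rest on the observation that $D_t$ is a deterministic function of $B_1,\dots,B_t$ alone, so that mutual independence of the trials immediately gives independence of $D_t$ from any $B_s$ with $s>t$. Your version merely makes the measurability and factorization steps explicit where the paper states them in one line.
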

\begin{proof}
By definition, the number of successes takes integer values from 0 to the number of trials, i.e., $D_t$ takes values from $\{0, \dots, t\}.$ Further, $D_t$ is only dependent on $B_1, \dots, B_t,$ and $B_s$ is independent of $B_k$ for any $k\neq s.$ Given $s>t,$ we have that $D_t$ and $B_s$ are independent.
\end{proof}

Now we present and prove Thm.~\ref{thm:longest_consecutive_successes_distribution}.

\begin{proof}
We first prove the behavior at the boundaries.

Since the total number of successes is bounded by the number of trials, we always have $D_t\leq t.$ Therefore, $\mathbb{P}(D_t=L)=0$ whenever $L\geq t+1.$ The only chance of having pure successes is to never fail the trial, and hence $$\mathbb{P}(D_t=t) =\prod_{s=1}^t p_s, \;\; t\in\{1, \dots, T\}$$ due to the independence between the Bernoulli random variables. Similarly, the only chance of having zero successes is to fail the trial every time, and hence $$\mathbb{P}(D_t=0) =\prod_{s=1}^t(1-p_s), \;\; t\in\{1, \dots, T\}.$$ 
For $\mathbb{P}(D_t=t-1)$ with $t\in\{2, \dots, T\},$ we require exactly one failure at either the very end or the very beginning. Therefore, 
\begin{align*}\mathbb{P}(D_t=t-1)&=(1-p_t)\cdot\left(\prod_{s=1}^{t-1} p_s\right)+(1-p_1)\cdot\left(\prod_{s=2}^{t} p_s\right),\\& t\in\{2, \dots, T\}.\end{align*}
In order to compute $\mathbb{P}(D_t=L)$ for $t\in\{3, \dots, T\},\;L\in\{1, \dots, t-2\}$ if $T\geq 3,$ we first analyze the role of this time step $t$. There are two possibilities for time step $t$: either it is an endpoint for a chain with $L$ consecutive successes, or it is not such a point. Note that we concentrate on the time series from the start (time step $1$) till time step $t,$ and we impose no constraints on future time steps.

For the first situation, we require the point before this chain to be a ``failure" point, and that the longest consecutive success length before this ``failure" point is no more than $L$ (since $L$ is achieved by the chain already and we need to be consistent with the definition of the largest length). Mathematically, we require that 
\begin{itemize}
    \item $\prod_{s=t-L+1}^tB_s=1$ for the definition of the chain containing $L$ consecutive successes until $t$; 
    \item $B_{t-L}=0$ for the definition of the ``failure" point;
    \item $D_{t-L-1}\leq L$ as $L$ is defined to be the largest length until $t$ and that this largest length could be achieved more than once (and hence we take $\leq L$ instead of $< L$). Note that the definition of $D_{t-L-1}$ is valid as we have $t-L-1\geq t-(t-2) - 1 \geq 1.$ 
\end{itemize}

For the second situation, since time step $t$ is not an endpoint of a chain with $L$ consecutive successes ending at $t$, and that $L\leq t-2,$ there must be at least one ``failure" point within the time steps $t-L+1, \dots, t.$ Denote the last ``failure" time step before or at time step $t$ as $s\in\{t-L+1, \dots, t\},$ then $B_s=0.$ If $s+1\leq t$, then by definition, all points after $s$ should be successes, i.e., $\prod_{k=s+1}^tB_k=1$. In addition, we require that the largest length is $L$ before this ``failure" point, i.e., $D_{s-1}=L$. Here, the definition of $D_{s-1}$ is valid as $s-1\geq (t-L+1)-1\geq t-L\geq 2\geq 1.$ To summarize, for the second situation, the mathematical requirements are 
\begin{itemize}
    \item $B_s=0$ for some $s\in\{t-L+1, \dots, t\};$
    \item $\prod_{k=s+1}^tB_k=1$ if $s+1\leq t;$ 
    \item $D_{s-1}=L$.
\end{itemize}

The second point implicitly assumes $(t-L+1)+1\leq t,$ i.e., $L\geq 2;$ then since $L\leq t-2,$ this further implies that $t\geq 4,$ and hence it is only possible when $T\geq 4.$

Therefore, we arrive at the following recurrence relations. If $T\geq 4,$ for $t\in\{4, \dots, T\},\;L\in\{2, \dots, t-2\},$ with $\mathbf{1}(\cdot)$ being an indicator function:
\begin{align}
\begin{split}
\label{eq:D_tL_recurrence}
   & \mathbb{P}(D_t=L) \\ & =  \mathbb{P}\left[\mathbf{1}(D_{t-L-1}\leq L) \bigcap \mathbf{1}(B_{t-L}=0)\bigcap \mathbf{1}(B_s=1\;\forall s=t-L+1, \dots, t)\right]\\
    &+ \sum_{s=t-L+1}^{t-1}\mathbb{P}\left[\mathbf{1}\left(B_s=0\right)\bigcap \mathbf{1}\left(B_k=1\;\forall \;k=s+1, \dots, t\right)\bigcap \mathbf{1}\left(D_{s-1}=L\right)\right]\\
    &+ \mathbb{P}\left[\mathbf{1}\left(B_t=0\right)\bigcap \mathbf{1}\left(D_{t-1}=L\right)\right]\\
    &= \left[\sum_{l=0}^{L}\mathbb{P}(D_{t-L-1}=l)\right]\cdot\mathbb{P}(B_{t-L}=0)\cdot\left[\prod_{s=t-L+1}^t \mathbb{P}(B_s=1)\right]\\
    &+ \sum_{s=t-L+1}^{t-1}\mathbb{P}(B_s=0)\cdot\left[\prod_{k=s+1}^t \mathbb{P}(B_k=1)\right]\cdot\mathbb{P}(D_{s-1}=L) + \mathbb{P}(B_t=0)\cdot \mathbb{P}(D_{t-1}=L)\\
    &= \left[\sum_{l=0}^{L}\mathbb{P}(D_{t-L-1}=l)\right]\cdot(1-p_{t-L})\cdot\left(\prod_{s=t-L+1}^t p_s\right)\\
    &+ \sum_{s=t-L+1}^{t-1}(1-p_s)\cdot\left(\prod_{k=s+1}^t p_k\right)\cdot\mathbb{P}(D_{s-1}=L) + (1-p_t)\cdot \mathbb{P}(D_{t-1}=L).
\end{split}
\end{align}
Note that the products could be taken as the $B_m$ terms have their indices $m$ greater than those from the $D_l$ terms, i.e., $m>l$, given Prop.~\ref{prop:independence_M_B}. 
Specifically, in the first term in the summation of Eq.~(\ref{eq:D_tL_recurrence}), $l=t-L-1<m$ for $m\in\{t-L, \dots, t\};$ in the second term of summation, $l=s-1<m$ for $m\in\{s, s+1, \dots, t\};$ for the last term in the summation, $l=t-1<t=m.$

For $L=1$ and $t\in\{3, \dots, T\}$ if $T\geq 3,$ we have, similarly,
\begin{align}
\begin{split}
    &\mathbb{P}(D_t=1) \\
    &= \mathbb{P}\left[\mathbf{1}(D_{t-2}\leq 1) \bigcap \mathbf{1}(B_{t-1}=0)\bigcap \mathbf{1}(B_t=1)\right] + \mathbb{P}\left[\mathbf{1}\left(B_t=0\right)\bigcap \mathbf{1}\left(D_{t-1}=1\right)\right]\\
    &= \left[\sum_{l=0}^{1}\mathbb{P}(D_{t-2}=l)\right]\cdot\mathbb{P}(B_{t-1}=0)\cdot\mathbb{P}(B_t=1) + \mathbb{P}(B_t=0)\cdot \mathbb{P}(D_{t-1}=1)\\
    &= \left[\sum_{l=0}^{L}\mathbb{P}(D_{t-2}=l)\right]\cdot(1-p_{t-1})\cdot p_t + (1-p_t)\cdot \mathbb{P}(D_{t-1}=1).
\end{split}
\end{align}

Combining the above, we have that 
\begin{align*}
     \mathbb{P}(D_t=L)&=0,\;\;t\in\{1, \dots, T\}, \; L\geq t+1,
\end{align*}
as well as 
\begin{align*}
\begin{split}
    \mathbb{P}(D_t=t) &=\prod_{s=1}^t p_s, \;\; t\in\{1, \dots, T\},\\
    \mathbb{P}(D_t=0) &=\prod_{s=1}^t(1-p_s), \;\; t\in\{1, \dots, T\},\\
    \mathbb{P}(D_t=t-1)
    &=(1-p_t)\cdot\left(\prod_{s=1}^{t-1} p_s\right)+(1-p_1)\cdot\left(\prod_{s=2}^{t} p_s\right),\;\;t\in\{2, \dots, T\},\\
    \mathbb{P}(D_t=1) &=\left[\sum_{l=0}^{1}\mathbb{P}(D_{t-2}=l)\right]\cdot(1-p_{t-1})\cdot p_t\\
    &+ (1-p_t)\cdot \mathbb{P}(D_{t-1}=1), \;\;t\in\{3, \dots, T\}\text{ if }T\geq 3,\\
    \mathbb{P}(D_t=L) &=\left[\sum_{l=0}^{L}\mathbb{P}(D_{t-L-1}=l)\right]\cdot(1-p_{t-L})\cdot\left(\prod_{s=t-L+1}^t p_s\right)\\
    &+ \sum_{s=t-L+1}^{t-1}(1-p_s)\cdot\left(\prod_{k=s+1}^t p_k\right)\cdot\mathbb{P}(D_{s-1}=L)\\
    &+ (1-p_t)\cdot \mathbb{P}(D_{t-1}=L), \;\;t\in\{4, \dots, T\}\text{ if }T\geq 4,\;L\in\{2, \dots, t-2\}.
\end{split}
\end{align*}
\end{proof}
\subsection{Proposition and Proof for Same-Community Probability}
\begin{proposition}\label{prop:prob_same_community_each_time}
Suppose for a time step $t,$ both nodes $i$ and $j$ exist in the network containing $n_t$ nodes and $K_t$ communities $\mathcal{C}_1^{t}, \dots, \mathcal{C}_{K_t}^{t}.$ Suppose all nodes have the same i.i.d. community assignment distribution, then the distribution of $i$ and $j$ being in the same community at time step $t$ is a Bernoulli random variable $B_{i,j}^t$ with success probability 
\begin{equation*}
  p_{i,j}^t=\sum_{k=1}^{K_t}\frac{\left\lvert\mathcal{C}_k^{t}\right\rvert\left(\left\lvert\mathcal{C}_k^{t}\right\rvert-1\right)}{n_t(n_t-1)}.
\end{equation*}
\end{proposition}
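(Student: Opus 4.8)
The plan is to compute the probability that two distinguished nodes $i$ and $j$ land in the same community by conditioning on which community each is assigned to, and then exploiting the exchangeability assumption to turn this into a simple counting argument. First I would set up notation: let the community assignment of node $i$ be a random variable $Z_i \in \{1, \dots, K_t\}$, and similarly $Z_j$; the event ``$i$ and $j$ are in the same community'' is $\{Z_i = Z_j\}$, so $B_{i,j}^t = \mathbf{1}(Z_i = Z_j)$ is indeed a Bernoulli random variable, and $p_{i,j}^t = \mathbb{P}(Z_i = Z_j)$.

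Next I would make precise what ``all nodes have the same i.i.d.\ community assignment distribution'' buys us. The cleanest reading, consistent with the stated answer, is the \emph{random labelling} model: given the partition into communities of fixed sizes $|\mathcal{C}_1^t|, \dots, |\mathcal{C}_{K_t}^t|$, the $n_t$ nodes are assigned to slots uniformly at random, so that every ordered pair of distinct nodes is equally likely to occupy any given ordered pair of distinct slots. Under this model, $\mathbb{P}(Z_i = Z_j) = \mathbb{P}(Z_i = k,\, Z_j = k \text{ for some } k)$, and by disjointness over $k$ this equals $\sum_{k=1}^{K_t} \mathbb{P}(Z_i = k,\, Z_j = k)$. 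The probability that two specified distinct nodes both fall in community $\mathcal{C}_k^t$ is the number of ordered pairs of distinct slots inside $\mathcal{C}_k^t$ divided by the number of ordered pairs of distinct slots overall, namely $\frac{|\mathcal{C}_k^t|(|\mathcal{C}_k^t|-1)}{n_t(n_t-1)}$. Summing over $k$ gives the claimed formula. (Equivalently, one can count unordered pairs: the total number of co-community pairs is $\sum_k \binom{|\mathcal{C}_k^t|}{2}$ out of $\binom{n_t}{2}$ possible pairs, and the factor of $2$ cancels.)

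The main obstacle is not the calculation — which is elementary — but pinning down and justifying the probabilistic model so that the phrase ``i.i.d.\ community assignment distribution'' is actually compatible with a fixed multiset of community sizes $\{|\mathcal{C}_k^t|\}$. Strictly speaking, genuinely i.i.d.\ assignments would produce random, not fixed, community sizes; the statement should be read as conditioning on the observed sizes, i.e.\ uniform random labelling among all assignments realising those sizes, which is exchangeable (hence ``the same distribution'' for every node) even though not independent across nodes. I would state this reading explicitly at the start of the proof, note that it is the natural symmetry assumption under which no node is privileged a priori, and then the two-line counting argument above closes it. A remark that the $n_t \geq 2$ case is the only one of interest (so $n_t(n_t-1) \neq 0$) and that $i \neq j$ is implicitly assumed would round things off.
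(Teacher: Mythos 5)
Your proof is correct and follows essentially the same route as the paper's: decompose the event over the $K_t$ communities and compute the probability that both nodes fall in $\mathcal{C}_k^t$ as $\binom{\lvert\mathcal{C}_k^t\rvert}{2}/\binom{n_t}{2}$, which is exactly the paper's two-line counting argument. Your added remark that literal i.i.d.\ assignments conflict with fixed community sizes, and that the statement should be read as uniform random labelling conditional on the observed sizes (exchangeable rather than independent), is a fair clarification the paper glosses over, but it does not change the substance of the argument.
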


\begin{proof}
As all nodes have the same i.i.d. community assignment distribution, the probability for nodes $i$ and $j$ being both in community $\mathcal{C}_k^{t}$ for some $k\in\{1, \dots, K_t\}$ is
\begin{equation*}
\frac{\binom{\left\lvert\mathcal{C}_k^{t}\right\rvert}{2}}{\binom{n_t}{2}}=\frac{\left\lvert\mathcal{C}_k^{t}\right\rvert\left(\left\lvert\mathcal{C}_k^{t}\right\rvert-1\right)/2}{n_t(n_t-1)/2}=\frac{\left\lvert\mathcal{C}_k^{t}\right\rvert\left(\left\lvert\mathcal{C}_k^{t}\right\rvert-1\right)}{n_t(n_t-1)}.
\end{equation*}
Taking into account all communities at time step $t,$ we have 
\begin{equation}
    p_{i,j}^t=\sum_{k=1}^{K_t}\frac{\left\lvert\mathcal{C}_k^{t}\right\rvert\left(\left\lvert\mathcal{C}_k^{t}\right\rvert-1\right)}{n_t(n_t-1)}.
\end{equation}
This completes the proof.
\end{proof}
\section{Implementation Details}
\label{app_sec:implementation}
\subsection{Experimental Setup}
\label{app_sec:experiment_setup}
Experiments were conducted on two compute nodes, each with 8 Nvidia Tesla T4, 96 Intel Xeon Platinum 8259CL CPUs @ 2.50GHz and $378$GB RAM. We run at most 5000 epochs using the Adam optimizer~\cite{kingma2014adam} with a learning rate of $0.1$ and an early stopping parameter to be 3000 epochs. Anonymized codes are provided at \url{https://anonymous.4open.science/r/ProxFuse}. Experimental results are averaged over three runs on different random seeds. Since we have multiple combinations of initial parameters, the total number of runs for each synthetic data set is $3\times(6+H_\text{syn})=3\times(6+5)=33$ and $3\times(6+H)=3\times(6+10)=48$ for the chimpanzee data set, given a fixed set of $\alpha_1, \alpha_2,$ and $\alpha_3$ values. Note also that indices from Python start from 0, so the first time step in the code is marked by $t=0$ instead of $t=1$ as in the manuscript.
\subsection{Synthetic Network Model}
\label{app_sec:synthetic_generation}
With the aim of constructing evolving individual networks, $\mathcal{G}^{(\text{raw}, h,t)}$ and $\mathcal{G}^{(\text{add}, h,t)}$, but stable cumulative network $\mathcal{G}^{(t)}=(\mathcal{N}^{(t)},\mathcal{E}^{(t)})$ over time, the synthetic models are constructed as follows. Let $n$ be the number of nodes in total, $T$ be the number of time steps, $H$ be the number of hierarchies (e.g., different proximity levels), and $\{p_h\}_{h=1}^H$ edge probabilities for each layer of the multiplex network. 

\textbf{Initialization.} For the first time step, $t=1,$  we assume that $\mathcal{G}^{(\text{raw}, h,t)}$, the raw single-type network for type $h$ at each time step $t,$ is generated independently as a Bernoulli random graph with $n$ nodes and edge probability $p_h$. The edge weights are sampled randomly from integers $\{1, \dots, H\}$. We use $\mathbf{A}^{(\text{raw}, h,t)}$ to denote its adjacency matrix. We then generate $\mathbf{A}^{(\text{add}, h,t)}$ by a Hadamard product of the binary version of $\mathbf{A}^{(\text{raw}, h,t)}$ with another Bernoulli random graph with $n$ nodes and edge probability $p_\text{add}$ (but the edge weights are again sampled randomly from integers $\{1, \dots, H\}$). Suppose networks are combined based on Eq.~(\ref{eq:adj_t}) in the main text. We then obtain
$$
\mathbf{A}^{(1)} = \sum_{h=1}^{H} \left( w_h \cdot \mathbf{A}^{(\text{raw}, h, 1)} + w_{\text{add}} \cdot \mathbf{A}^{(\text{add}, h, 1)} \right).$$
This fixed network $\mathbf{A}^{(1)}$ is used to initialize the process. 

At each subsequent time step $t > 1$, $\mathbf{A}^{(t)}$ is kept fixed and equal to $\mathbf{A}^{(1)}$, and the raw and ancillary networks are constructed through a randomized decomposition as follows.

\textbf{Edge assignment.} At time step $t > 1$, the edges of $\mathcal{G}^{(t)}$ are shuffled and redistributed among $H$ hierarchies based on normalized probabilities $\{p_h\}_{h=1}^H$. The number of edges assigned to each hierarchy $h$ is sampled from a multinomial distribution:
$$
\left\lvert\mathcal{E}^{h,t}\right\rvert \sim \text{Multinomial}\left(\left\lvert\mathcal{E}^{(1)}\right\rvert, \frac{p_h}{\sum_{h=1}^{H} p_h}\right),$$ where $\left\lvert\mathcal{E}^{(1)}\right\rvert$ denotes the total number of edges in the initial combined network $\mathcal{G}^{(1)}$ described in $\mathbf{A}^{(1)}.$ We then randomly assign the edges to each hierarchy based on $\left\lvert\mathcal{E}^{(h,t)}\right\rvert$, and construct subgraphs $\mathcal{G}^{(h,t)}=(\mathcal{N}^{(t)}, \mathcal{E}^{(h,t)})$, which share the same node set as $\mathcal{G}^{(t)}$, such that $\mathcal{E}^{(t)}=\cup_h\mathcal{E}^{(h,t)}$ and $\cap_h\mathcal{E}^{(h,t)}=\emptyset.$
The hierarchy-level combined adjacency matrix $\mathbf{A}^{(h,t)}$ is computed by normalizing the corresponding subgraph adjacency matrix:
$$
\mathbf{A}^{(h,t)}_{i,j} = \frac{\mathbf{A}^{(t)}_{i,j}}{W_h}= \frac{\mathbf{A}^{(t)}_{i,j}}{\sum_{k=1}^{h} w_k}
$$ for $(i,j)\in\mathcal{E}^{(h,t)}$ and $\mathbf{A}^{(h,t)}_{i,j} =0$ for $(i,j)\notin\mathcal{E}^{(h,t)}$ . 
This ensures that the weighted contributions of the network in hierarchy $h$ align with the structure of $\mathbf{A}^{(t)}$.

\textbf{Network construction.}
Recall that $\mathbf{A}^{(h,t)}=\mathbf{A}^{(\text{raw}, h, t)} + w_\text{add}\mathbf{A}^{(\text{add}, h, t)},$ and that the ancillary networks $\mathcal{G}^{(\text{add}, h, t)}$ are generated as subgraphs of the raw networks, $\mathcal{G}^{(\text{raw}, h, t)}$. First, we sample edges for $\mathcal{G}^{(\text{add}, h, t)}$ probabilistically based on $ p_{\text{add}}$, from $\mathcal{G}^{(h, t)}$. We then generate a temporary ancillary network with edge weights randomly sampled from $\{1, \dots, H\}.$ We denote the adjacency matrix of this temporary ancillary network by $\mathbf{A}^{(\text{temp-add}, h, t)}$. If $w_\text{add}=0,$ then we set $\mathbf{A}^{(\text{add}, h, t)}=\mathbf{A}^{(\text{temp-add}, h, t)}$. Otherwise, the edge weights of $\mathcal{G}^{(\text{add}, h, t)}$  are constrained to ensure nonnegativity and consistency:
$$
\mathbf{A}^{(\text{add}, h, t)}=\max\left(0, \min\left(\mathbf{A}^{(\text{temp-add}, h, t)}, \frac{\mathbf{A}^{(h,t)}}{w_{\text{add}}} - \epsilon\right)\right),
$$ by taking elementwise minimum and maximum, where $\epsilon$ is a small constant to ensure that $\mathcal{G}^{(\text{add}, h, t)}$ is a subgraph of $\mathcal{G}^{(\text{raw}, h, t)}$. We then construct $\mathcal{G}^{(\text{raw}, h, t)}$ by
$\mathbf{A}^{(\text{raw}, h, t)}= \mathbf{A}^{(h,t)}- w_\text{add}\mathbf{A}^{(\text{add}, h, t)}.$

At each time step, the nodes with nonzero degrees in $\mathcal{G}^{(t)}$ are identified as participating/existing nodes. 

This method ensures a controlled, randomized decomposition of $\mathbf{A}^{(t)}$ into hierarchical raw and ancillary networks, while preserving overall structural integrity and allowing for hierarchical variability.
\section{Extended Plots and Tables}
\label{app_sec:extended_plots_and_tables}

\begin{figure}[htb!]
\centering
    \begin{subfigure}[ht]{0.19\linewidth}
      \centering
      \includegraphics[width=\linewidth]{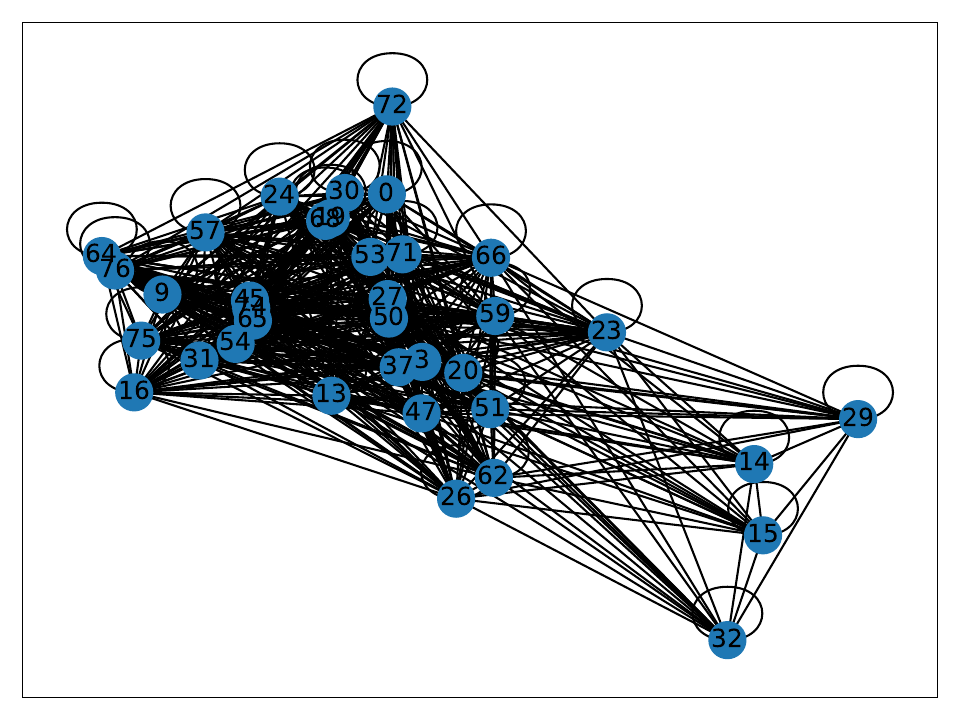}
      \subcaption{Type 1.}
    \end{subfigure}
    \begin{subfigure}[ht]{0.19\linewidth}
      \centering
      \includegraphics[width=\linewidth]{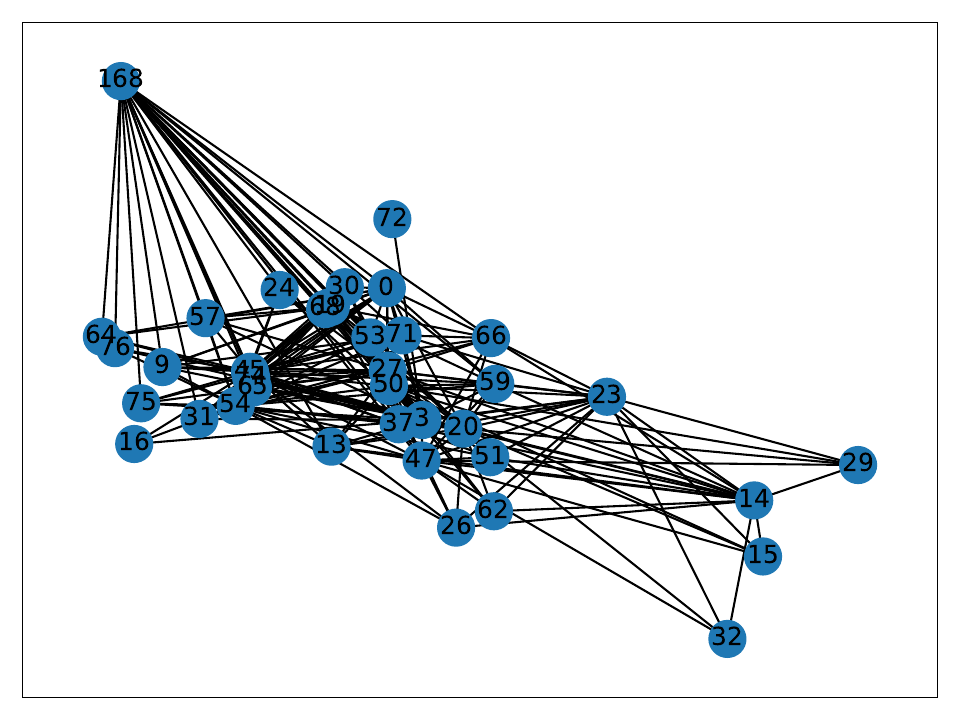}
      \subcaption{Type 2.}
    \end{subfigure}
    \begin{subfigure}[ht]{0.19\linewidth}
      \centering
      \includegraphics[width=\linewidth]{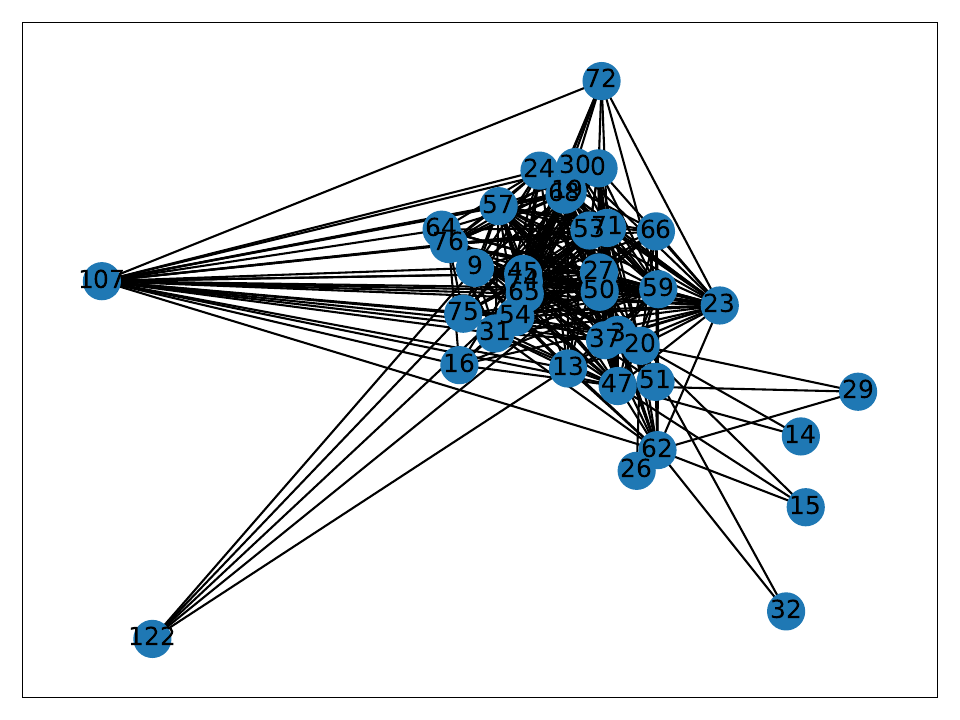}
      \subcaption{Type 3.}
    \end{subfigure}
    \begin{subfigure}[ht]{0.19\linewidth}
      \centering
      \includegraphics[width=\linewidth]{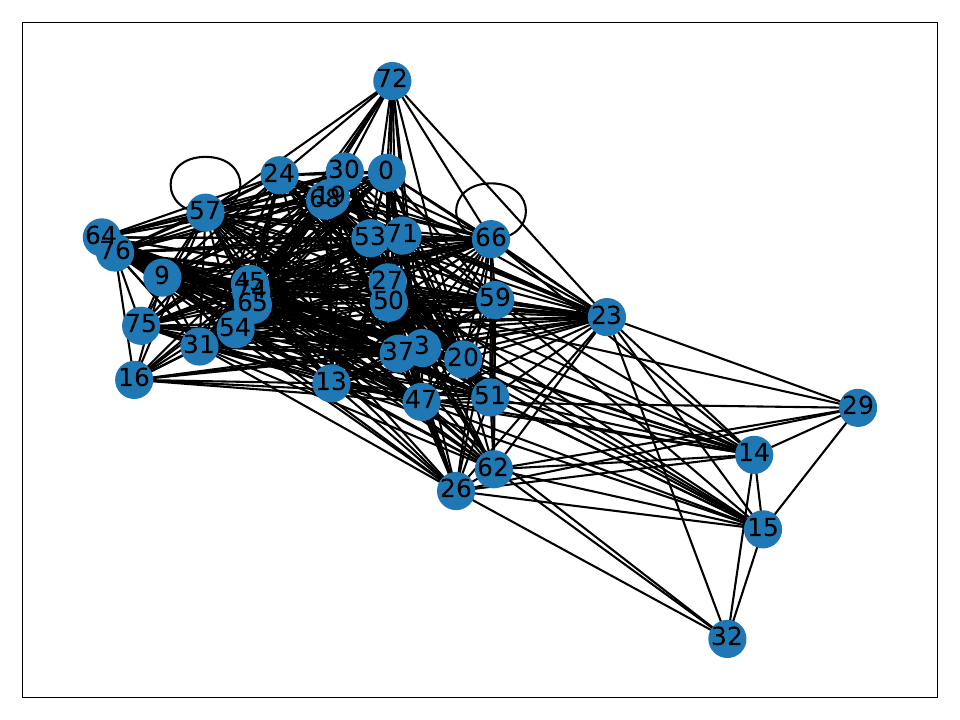}
      \subcaption{Type 4.}
    \end{subfigure}
    \begin{subfigure}[ht]{0.19\linewidth}
      \centering
      \includegraphics[width=\linewidth]{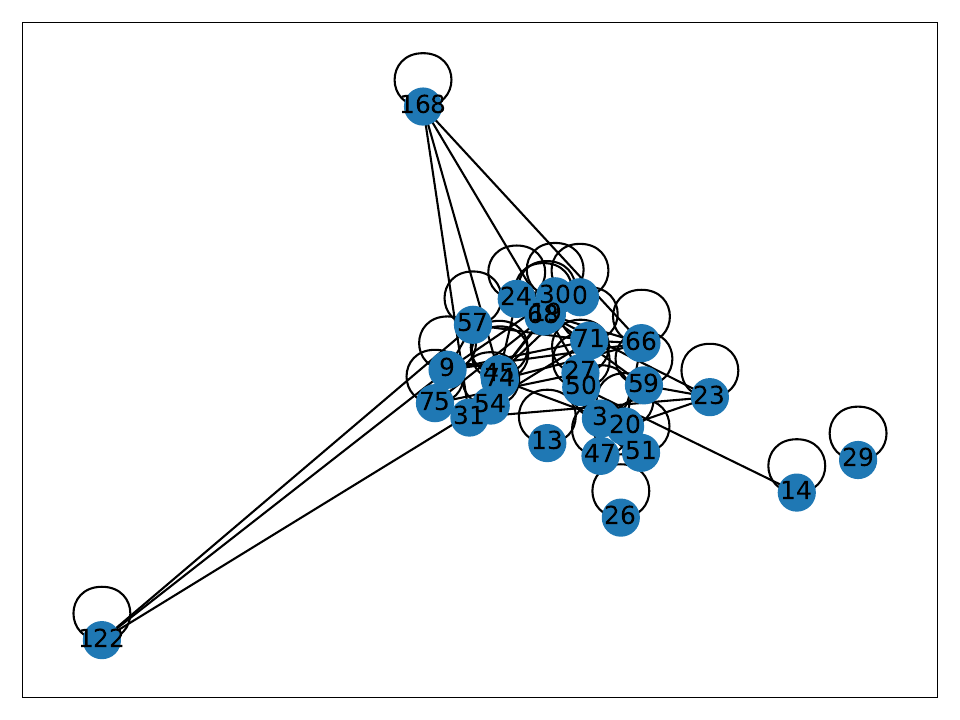}
      \subcaption{Type 5.}
    \end{subfigure}
    \begin{subfigure}[ht]{0.19\linewidth}
      \centering
      \includegraphics[width=\linewidth]{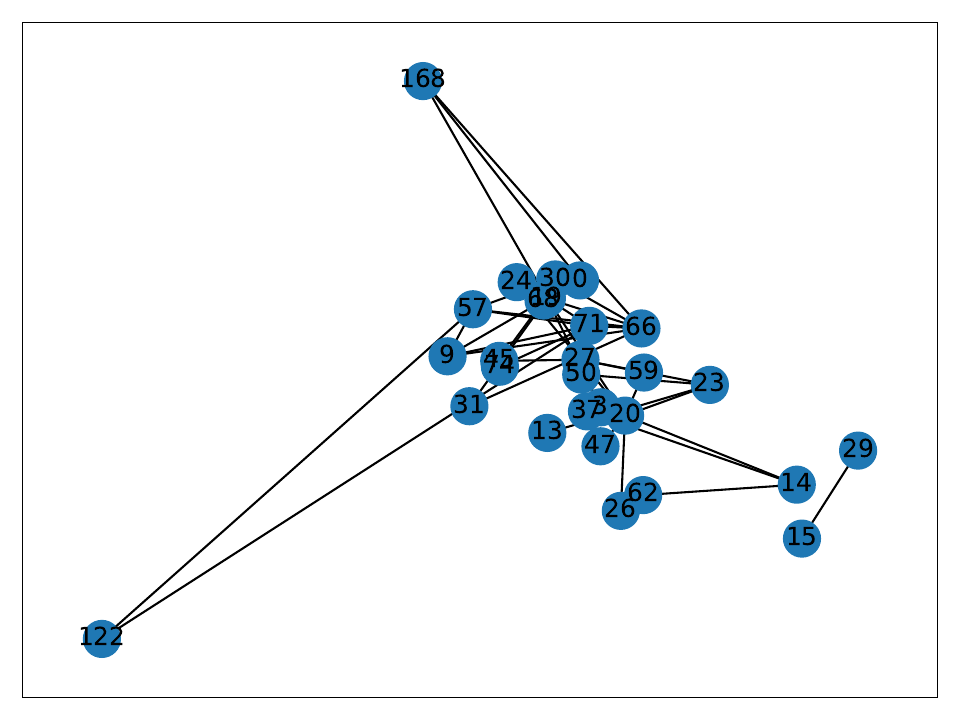}
      \subcaption{Type 6.}
    \end{subfigure}
    \begin{subfigure}[ht]{0.19\linewidth}
      \centering
      \includegraphics[width=\linewidth]{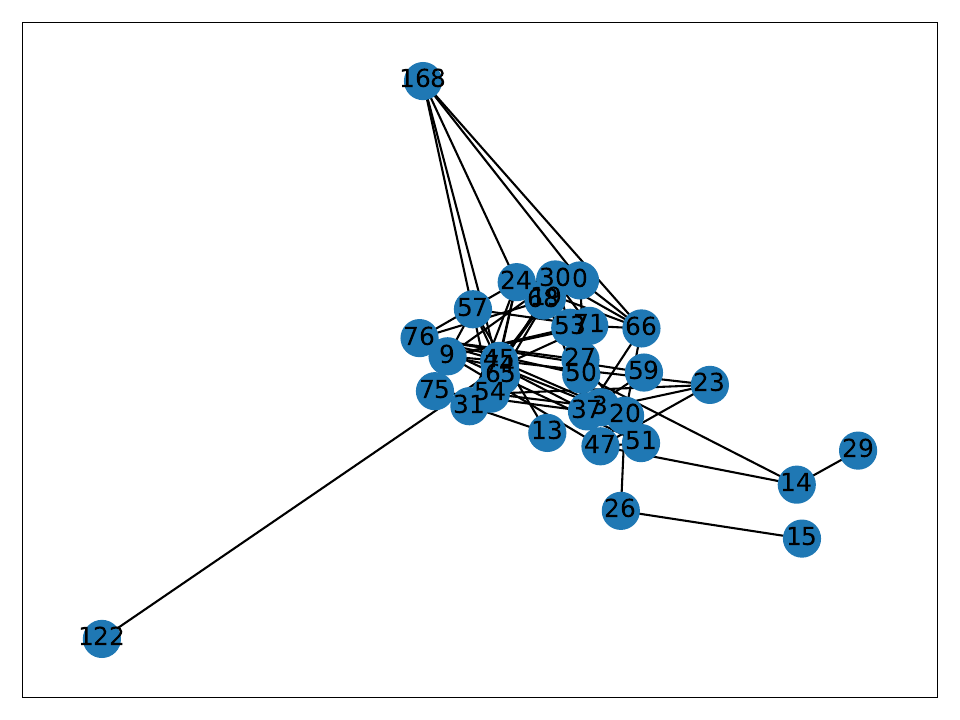}
      \subcaption{Type 7.}
    \end{subfigure}
    \begin{subfigure}[ht]{0.19\linewidth}
      \centering
      \includegraphics[width=\linewidth]{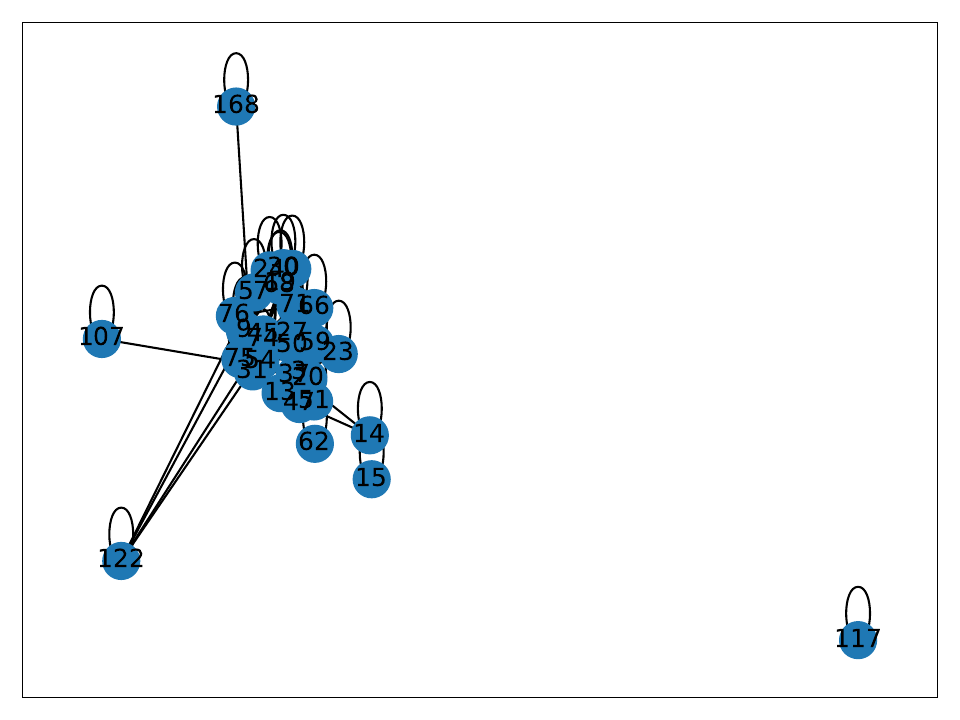}
      \subcaption{Type 8.}
    \end{subfigure}
    \begin{subfigure}[ht]{0.19\linewidth}
      \centering
      \includegraphics[width=\linewidth]{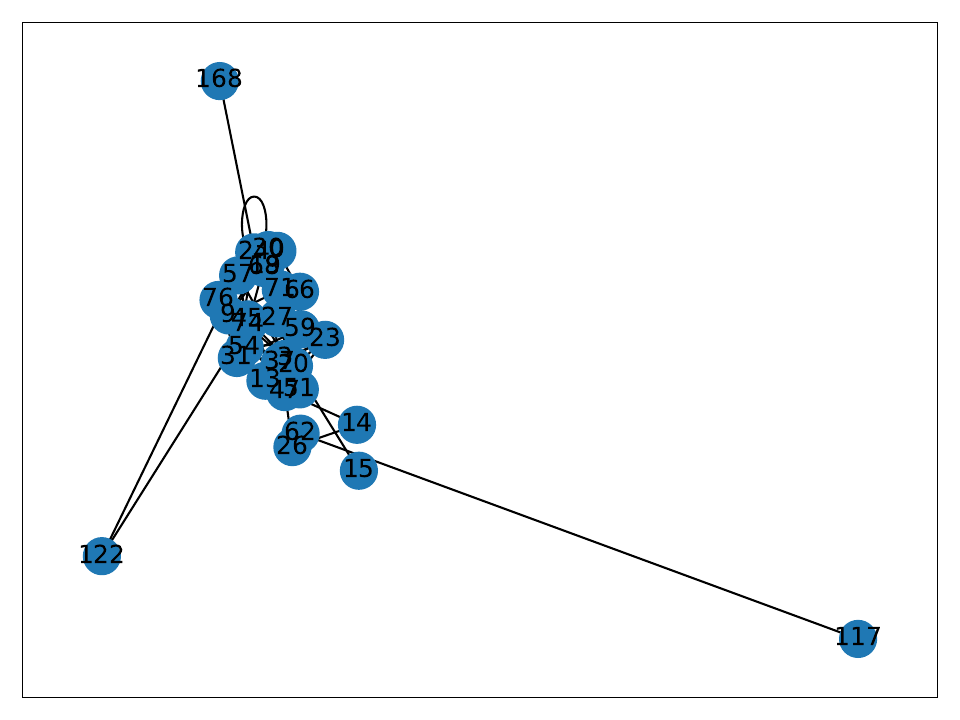}
      \subcaption{Type 9.}
    \end{subfigure}
    \begin{subfigure}[ht]{0.19\linewidth}
      \centering
      \includegraphics[width=\linewidth]{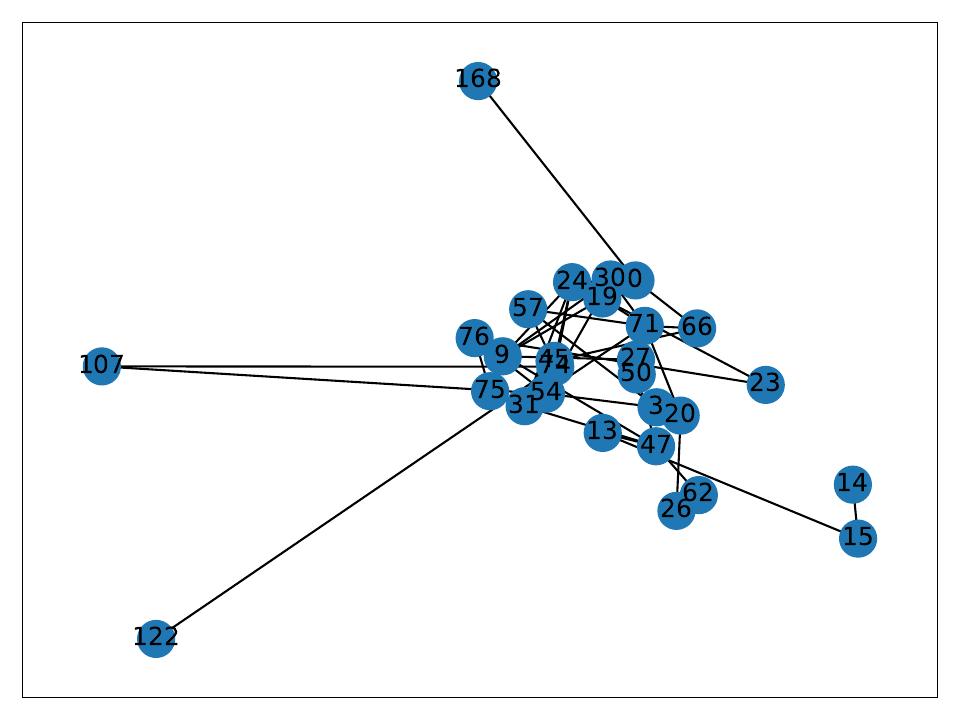}
      \subcaption{Type 10.}
    \end{subfigure}
    \caption{Visualization of the single-relationship networks for August 2006 using the same layout, where only nodes with edges are visualized. Different indices correspond to different individuals in the chimpanzee population.}
    \label{fig:separate_graphs200608}
\end{figure}

\begin{figure}[htb]
\centering
    \begin{subfigure}[ht]{0.6\linewidth}
      \centering
      \includegraphics[width=\linewidth]{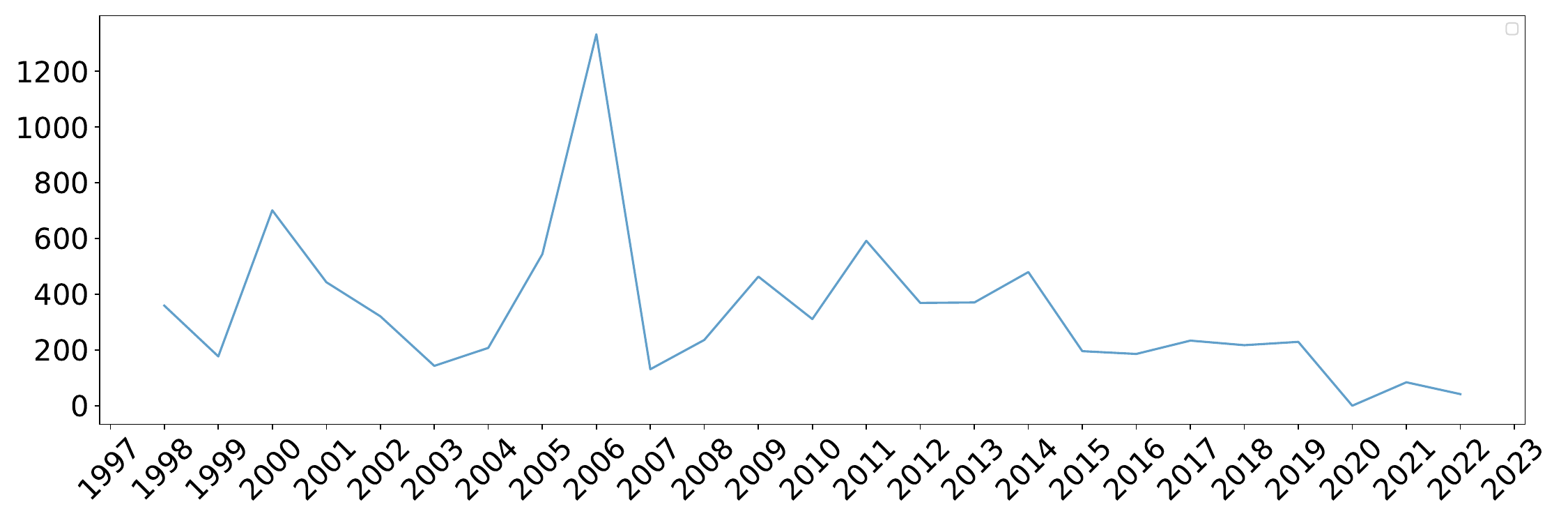}
      \subcaption{Weighted degrees.}
    \end{subfigure}
    \begin{subfigure}[ht]{0.6\linewidth}
      \centering
      \includegraphics[width=\linewidth]{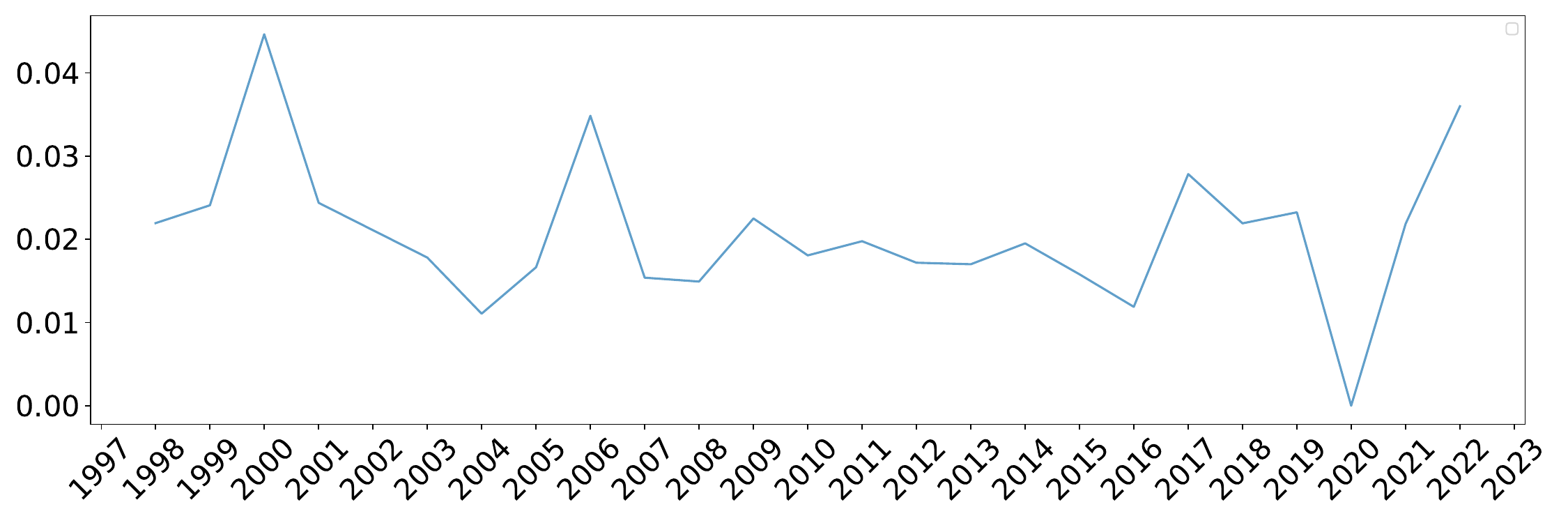}
      \subcaption{Local clustering coefficients.}
    \end{subfigure}
    \begin{subfigure}[ht]{0.6\linewidth}
      \centering
      \includegraphics[width=\linewidth]{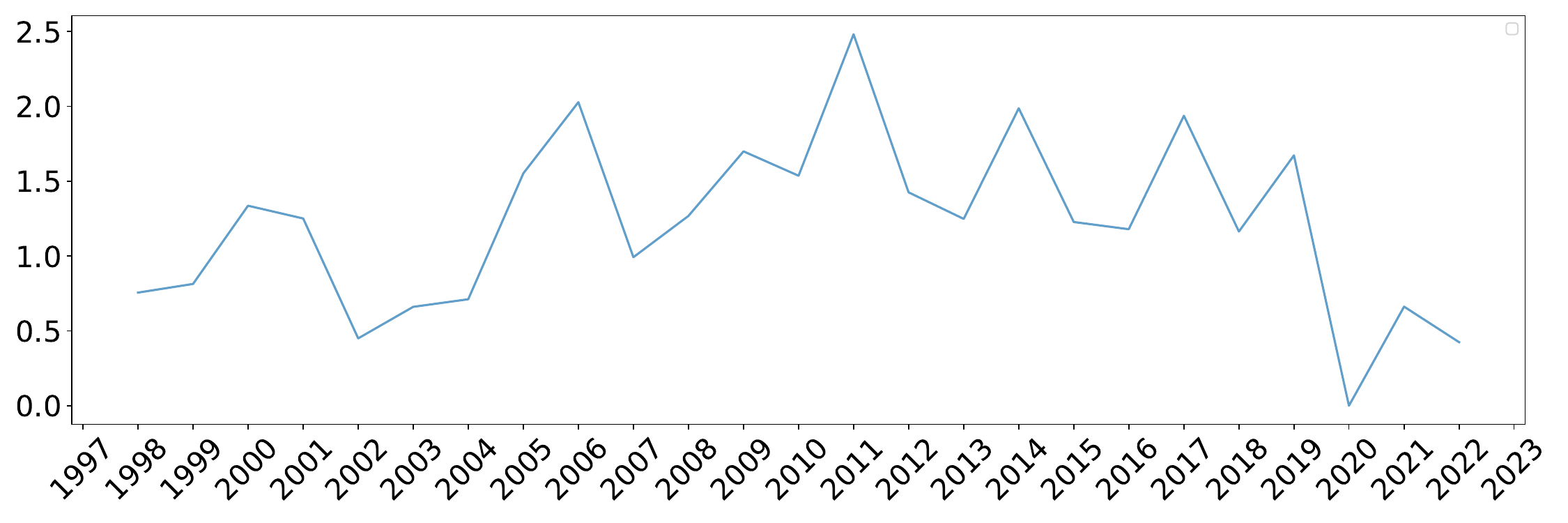}
      \subcaption{Closeness centrality values.}
    \end{subfigure}
    \caption{Key statistics of the learned networks, taking the average over all nodes.}
\label{fig:key_statistics}
\end{figure}

\begin{figure}[htb]
\centering
    \begin{subfigure}[ht]{0.48\linewidth}
      \centering
      \includegraphics[width=\linewidth, trim={0 0 0 1cm}, clip]{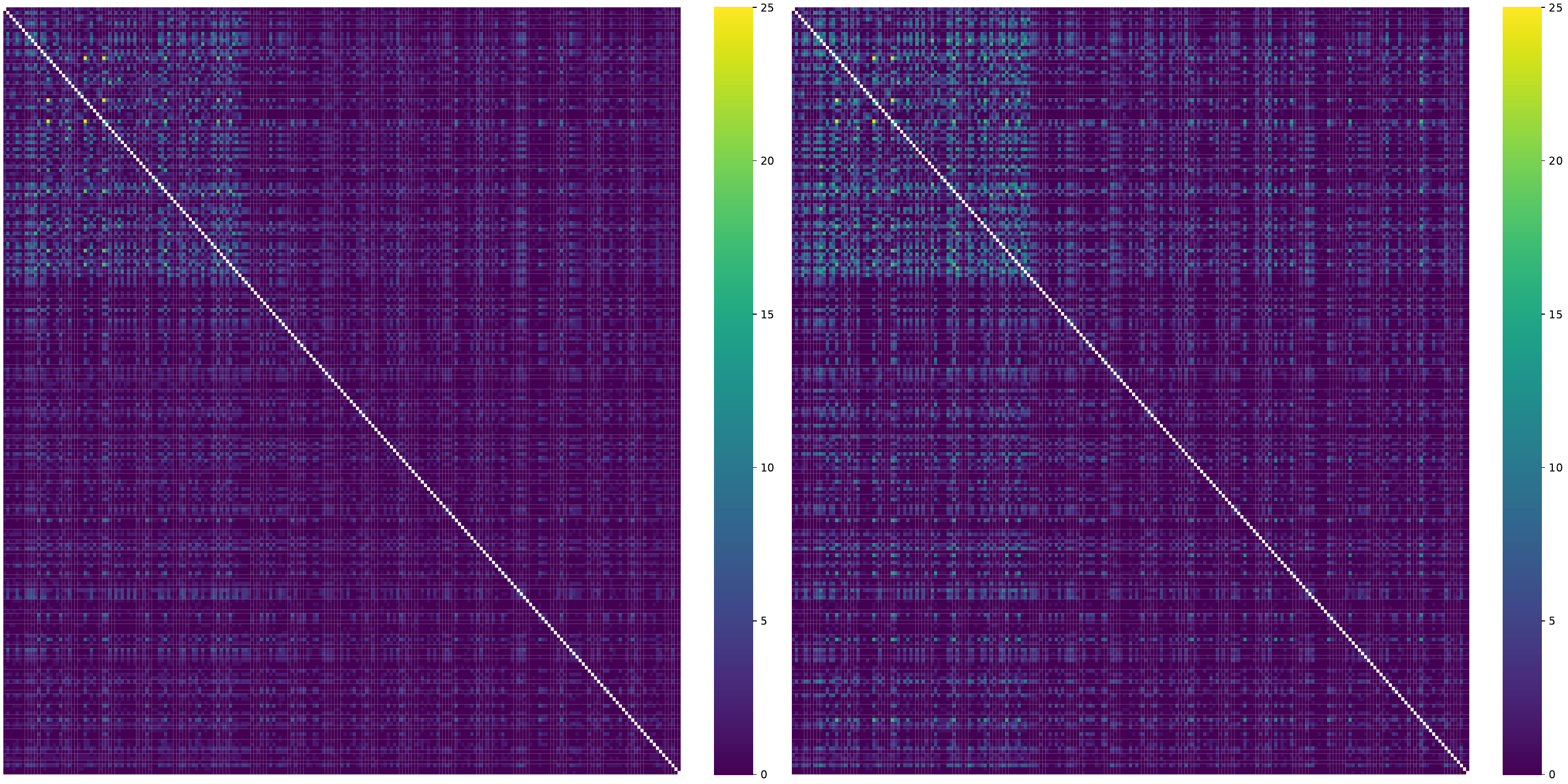}
    \subcaption{Similarity graphs.}
    \end{subfigure}
    \begin{subfigure}[ht]{0.48\linewidth}
      \centering
      \includegraphics[width=\linewidth, trim={0 0 0 1cm}, clip]{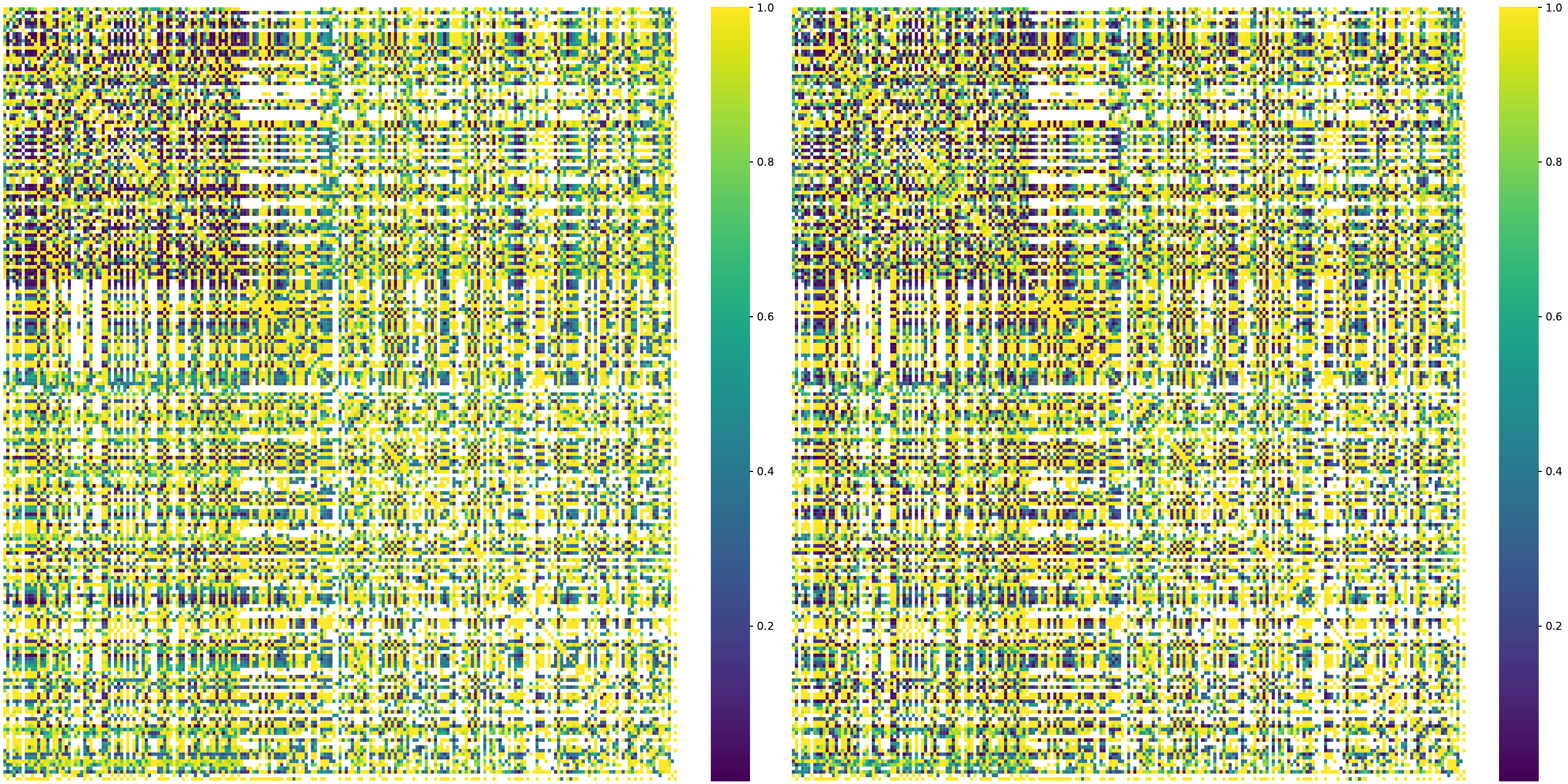}
    \subcaption{$p$-values.}
    \end{subfigure}
    \caption{Similarity graphs and $p$-values for the full learned graphs. For either panel, the left column corresponds to count similarity, while the right column corresponds to duration similarity.}
\label{fig:similarity_and_p_value_full}
\end{figure}
\begin{table}[htb!]
\centering
\small
\begin{tabular}{lrrrrrrr}
\toprule
Type&$w_2$&$w_3$&$w_4$&$w_5$&$w_\text{add}$ \\ 
\midrule
GT&0.0&1.0&0.0&1.0&0.0\\
$\alpha_3=0.0$&$0.0\pm0.0$&$1.0\pm0.0$&$0.0\pm0.0$&$1.0\pm0.0$&$0.0\pm0.0$&\\
$\alpha_3=0.001$&$0.0\pm0.0$&$0.9\pm0.0$&$0.2\pm0.0$&$0.7\pm0.0$&$0.0\pm0.0$&\\
\midrule
GT&0.0&1.0&0.0&1.0&0.3\\
$\alpha_3=0.0$&$0.0\pm0.0$&$1.0\pm0.0$&$0.0\pm0.0$&$1.0\pm0.0$&$0.3\pm0.0$&\\
$\alpha_3=0.001$&$0.0\pm0.0$&$0.9\pm0.0$&$0.2\pm0.0$&$0.7\pm0.0$&$0.3\pm0.0$&\\
\midrule
GT&0.0&1.0&1.0&0.0&0.0\\
$\alpha_3=0.0$&$0.0\pm0.0$&$1.0\pm0.0$&$1.0\pm0.0$&$0.0\pm0.0$&$0.0\pm0.0$&\\
$\alpha_3=0.001$&$0.0\pm0.0$&$0.9\pm0.0$&$0.8\pm0.0$&$0.1\pm0.0$&$0.0\pm0.0$&\\
\midrule
GT&0.0&1.0&1.0&0.0&0.3\\
$\alpha_3=0.0$&$0.0\pm0.0$&$1.0\pm0.0$&$1.0\pm0.0$&$0.0\pm0.0$&$0.3\pm0.0$&\\
$\alpha_3=0.001$&$0.0\pm0.0$&$0.9\pm0.0$&$0.8\pm0.0$&$0.1\pm0.0$&$0.3\pm0.0$&\\
\midrule
GT&0.0&0.6&0.0&1.2&0.0\\
$\alpha_3=0.0$&$0.0\pm0.0$&$0.6\pm0.0$&$0.0\pm0.0$&$1.2\pm0.0$&$0.0\pm0.0$&\\
$\alpha_3=0.001$&$0.0\pm0.0$&$0.5\pm0.0$&$0.1\pm0.0$&$0.9\pm0.0$&$0.0\pm0.0$&\\
\midrule
GT&0.0&0.6&0.0&1.2&0.3\\
$\alpha_3=0.0$&$0.0\pm0.0$&$0.6\pm0.0$&$0.0\pm0.0$&$1.2\pm0.0$&$0.3\pm0.0$&\\
$\alpha_3=0.001$&$0.0\pm0.0$&$0.5\pm0.0$&$0.1\pm0.0$&$0.9\pm0.0$&$0.3\pm0.0$&\\
\midrule
GT&1.2&0.0&0.0&0.6&0.0\\
$\alpha_3=0.0$&$1.2\pm0.0$&$0.0\pm0.0$&$0.0\pm0.0$&$0.6\pm0.0$&$0.0\pm0.0$&\\
$\alpha_3=0.001$&$0.9\pm0.0$&$0.1\pm0.0$&$0.1\pm0.0$&$0.4\pm0.0$&$0.0\pm0.0$&\\
\midrule
GT&1.2&0.0&0.0&0.6&0.3\\
$\alpha_3=0.0$&$1.2\pm0.0$&$0.0\pm0.0$&$0.0\pm0.0$&$0.6\pm0.0$&$0.3\pm0.0$&\\
$\alpha_3=0.001$&$0.9\pm0.0$&$0.1\pm0.0$&$0.1\pm0.0$&$0.4\pm0.0$&$0.3\pm0.0$&\\
\midrule
GT&0.0&1.0&1.0&1.0&0.0\\
$\alpha_3=0.0$&$0.0\pm0.0$&$1.0\pm0.0$&$1.0\pm0.0$&$1.0\pm0.0$&$0.0\pm0.0$&\\
$\alpha_3=0.001$&$0.0\pm0.0$&$0.9\pm0.0$&$0.9\pm0.0$&$0.7\pm0.0$&$0.0\pm0.0$&\\
\midrule
GT&0.0&1.0&1.0&1.0&0.3\\
$\alpha_3=0.0$&$0.0\pm0.0$&$1.0\pm0.0$&$1.0\pm0.0$&$1.0\pm0.0$&$0.3\pm0.0$&\\
$\alpha_3=0.001$&$0.0\pm0.0$&$0.9\pm0.0$&$0.9\pm0.0$&$0.8\pm0.0$&$0.3\pm0.0$&\\
\midrule
GT&1.0&1.0&1.0&0.0&0.0\\
$\alpha_3=0.0$&$1.0\pm0.0$&$1.0\pm0.0$&$1.0\pm0.0$&$0.0\pm0.0$&$0.0\pm0.0$&\\
$\alpha_3=0.001$&$0.8\pm0.0$&$0.9\pm0.0$&$0.7\pm0.0$&$0.2\pm0.0$&$0.0\pm0.0$&\\
\midrule
GT&1.0&1.0&1.0&0.0&0.3\\
$\alpha_3=0.0$&$1.0\pm0.0$&$1.0\pm0.0$&$1.0\pm0.0$&$0.0\pm0.0$&$0.3\pm0.0$&\\
$\alpha_3=0.001$&$0.8\pm0.0$&$0.9\pm0.0$&$0.7\pm0.0$&$0.2\pm0.0$&$0.3\pm0.0$&\\
\midrule
GT&0.0&0.6&0.3&1.2&0.0\\
$\alpha_3=0.0$&$0.0\pm0.0$&$0.6\pm0.0$&$0.3\pm0.0$&$1.2\pm0.0$&$0.0\pm0.0$&\\
$\alpha_3=0.001$&$0.0\pm0.0$&$0.5\pm0.0$&$0.4\pm0.0$&$0.9\pm0.0$&$0.0\pm0.0$&\\
\midrule
GT&0.0&0.6&0.3&1.2&0.3\\
$\alpha_3=0.0$&$0.0\pm0.0$&$0.6\pm0.0$&$0.3\pm0.0$&$1.2\pm0.0$&$0.3\pm0.0$&\\
$\alpha_3=0.001$&$0.0\pm0.0$&$0.5\pm0.0$&$0.4\pm0.0$&$0.9\pm0.0$&$0.3\pm0.0$&\\
\midrule
GT&0.6&1.2&0.0&0.3&0.0\\
$\alpha_3=0.0$&$0.6\pm0.0$&$1.2\pm0.0$&$0.0\pm0.0$&$0.3\pm0.0$&$0.0\pm0.0$&\\
$\alpha_3=0.001$&$0.5\pm0.0$&$0.9\pm0.0$&$0.2\pm0.0$&$0.2\pm0.0$&$0.0\pm0.0$&\\
\midrule
GT&0.6&1.2&0.0&0.3&0.3\\
$\alpha_3=0.0$&$0.6\pm0.0$&$1.2\pm0.0$&$0.0\pm0.0$&$0.3\pm0.0$&$0.3\pm0.0$&\\
$\alpha_3=0.001$&$0.5\pm0.0$&$0.9\pm0.0$&$0.2\pm0.0$&$0.2\pm0.0$&$0.3\pm0.0$&\\
\bottomrule
\end{tabular}
\caption{Synthetic data results. ``GT" indicates ground truth, $\alpha_3=0$ and $\alpha_3=0.001$ indicate optimized weights by our proposed method for $\alpha_3=0$ and $\alpha_3=0.001$, respectively.
} 
\label{tab:synthetic_res}
\end{table}

\begin{figure}[htb]
\centering
    \begin{subfigure}[ht]{0.32\linewidth}
      \centering
      \includegraphics[width=\linewidth, trim={1cm 1cm 1cm 8cm}, clip]{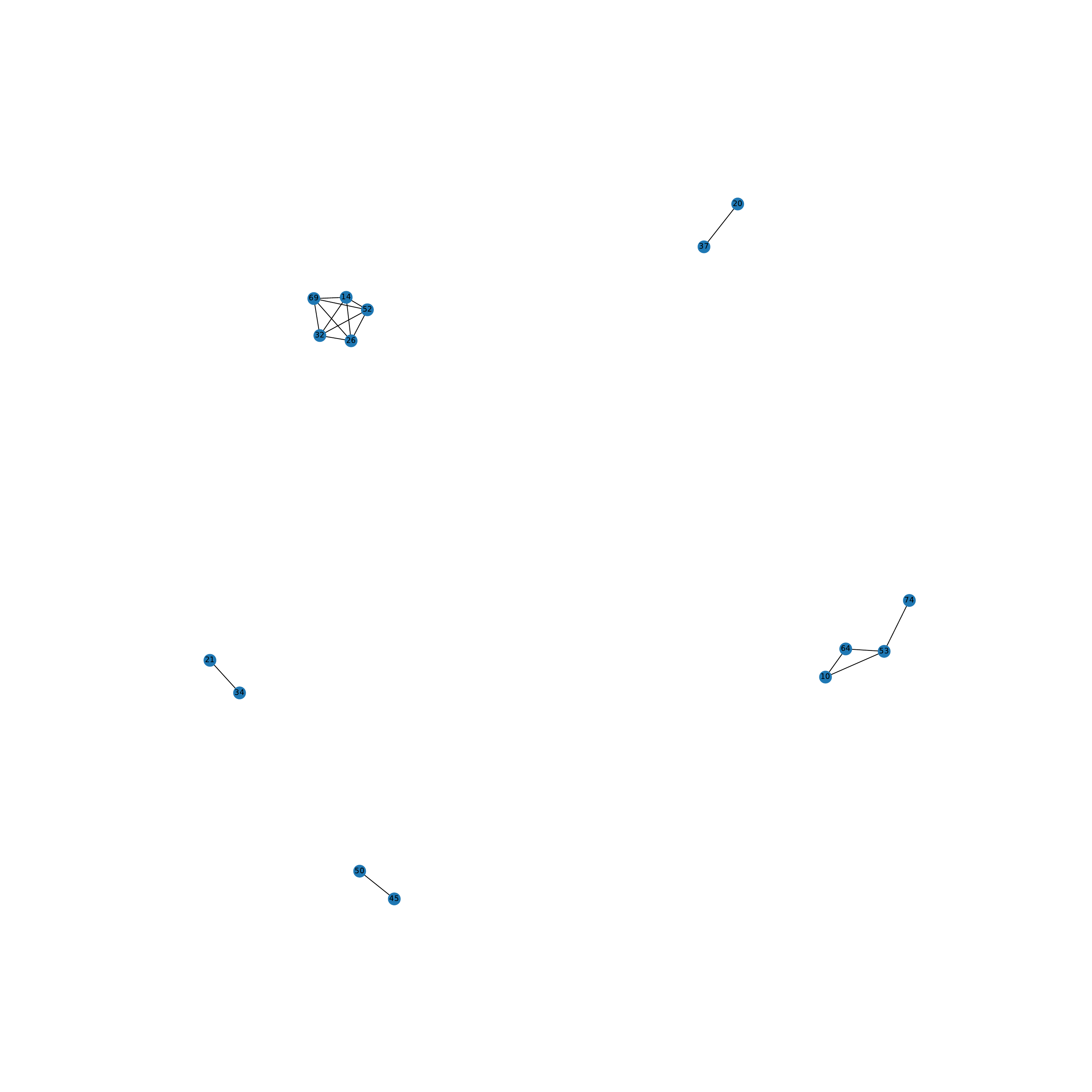}
      \subcaption{Learned: Count Similarity}
    \end{subfigure}
    \begin{subfigure}[ht]{0.32\linewidth}
      \centering
      \includegraphics[width=\linewidth, trim={1cm 1cm 1cm 8cm}, clip]{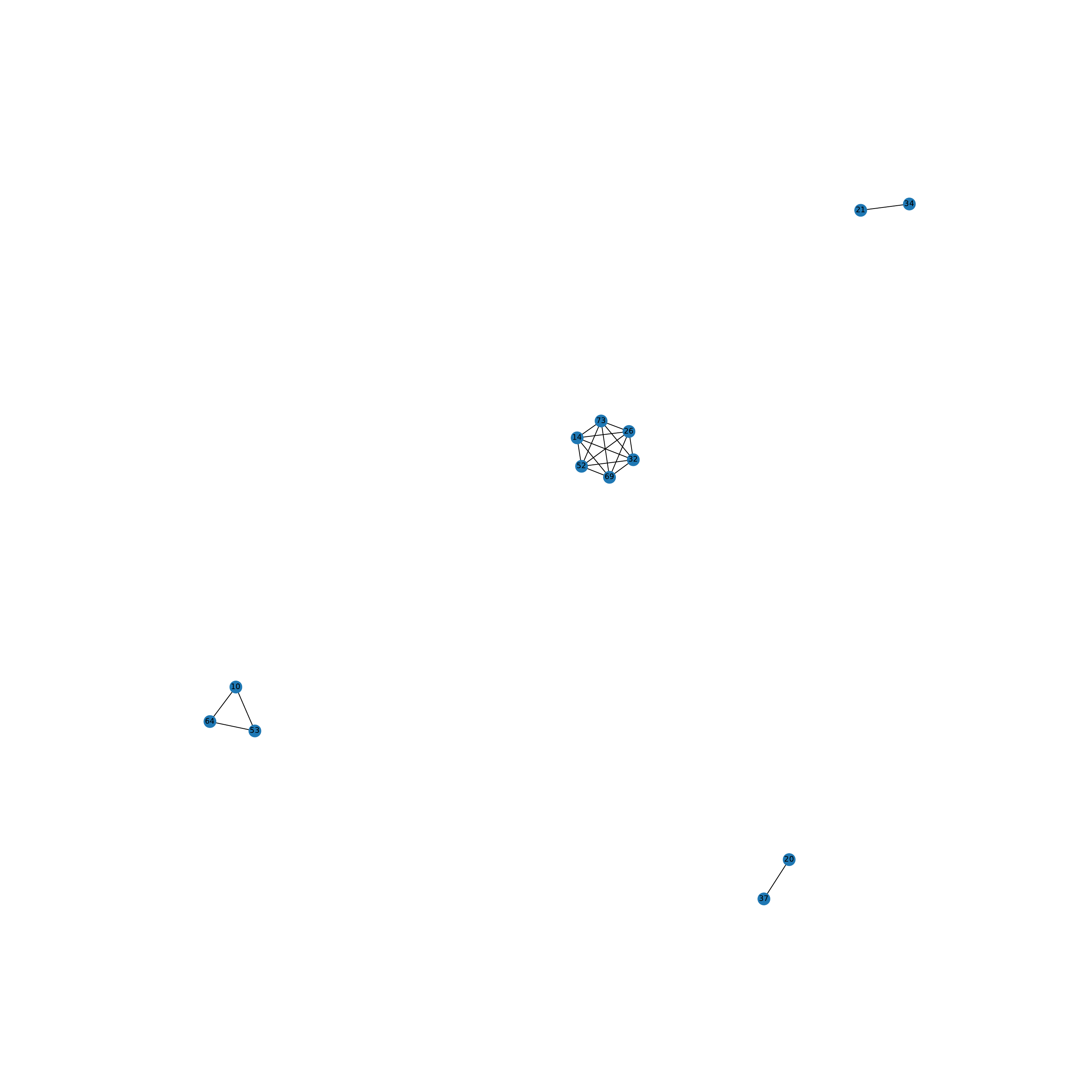}
      \subcaption{Learned: Duration Similarity}
    \end{subfigure}
    \centering
    \begin{subfigure}[ht]{0.32\linewidth}
      \centering
      \includegraphics[width=\linewidth, trim={1cm 1cm 1cm 8cm}, clip]{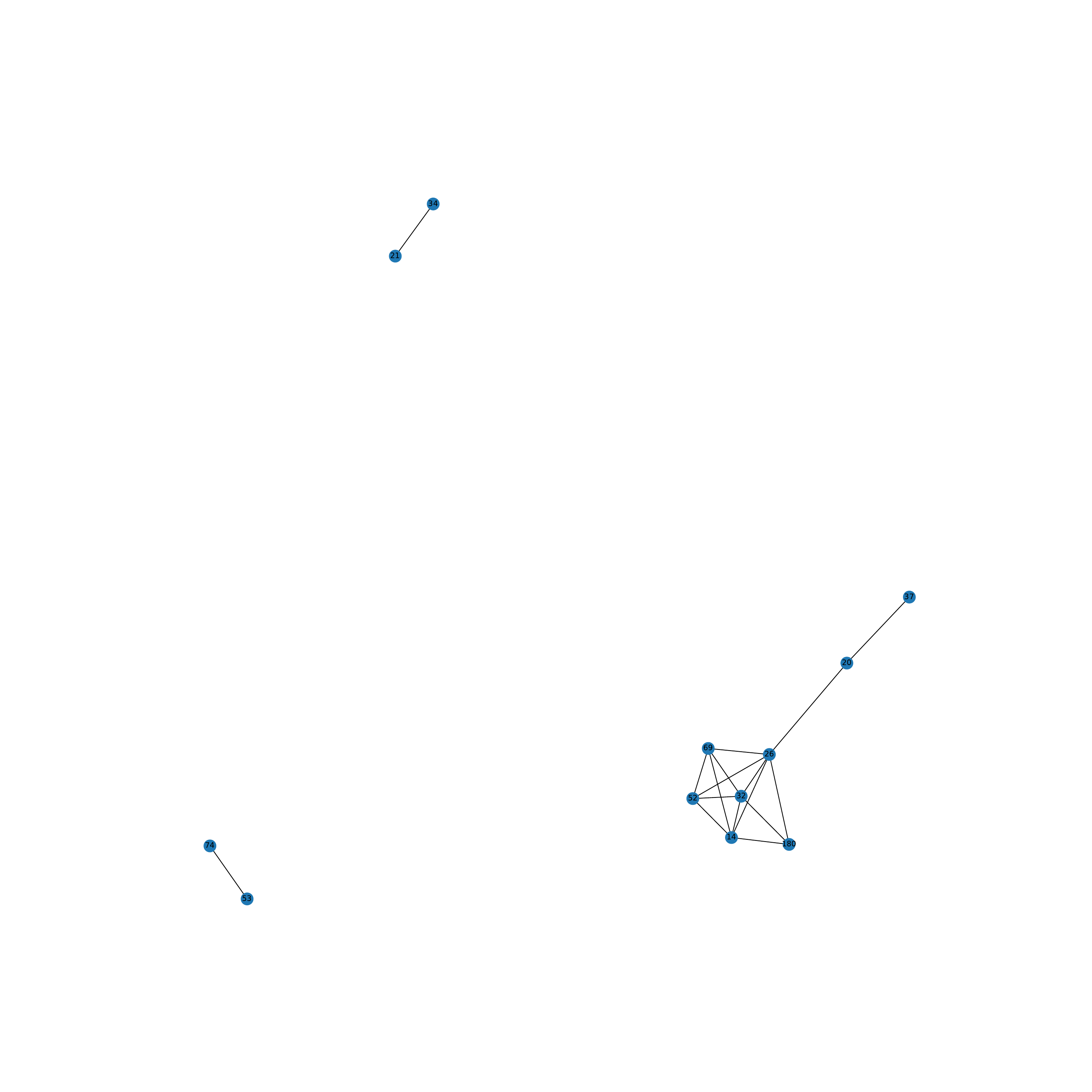}
      \subcaption{Unlearned: Count Similarity}
    \end{subfigure}
    \begin{subfigure}[ht]{0.32\linewidth}
      \centering
      \includegraphics[width=\linewidth, trim={1cm 1cm 1cm 8cm}, clip]{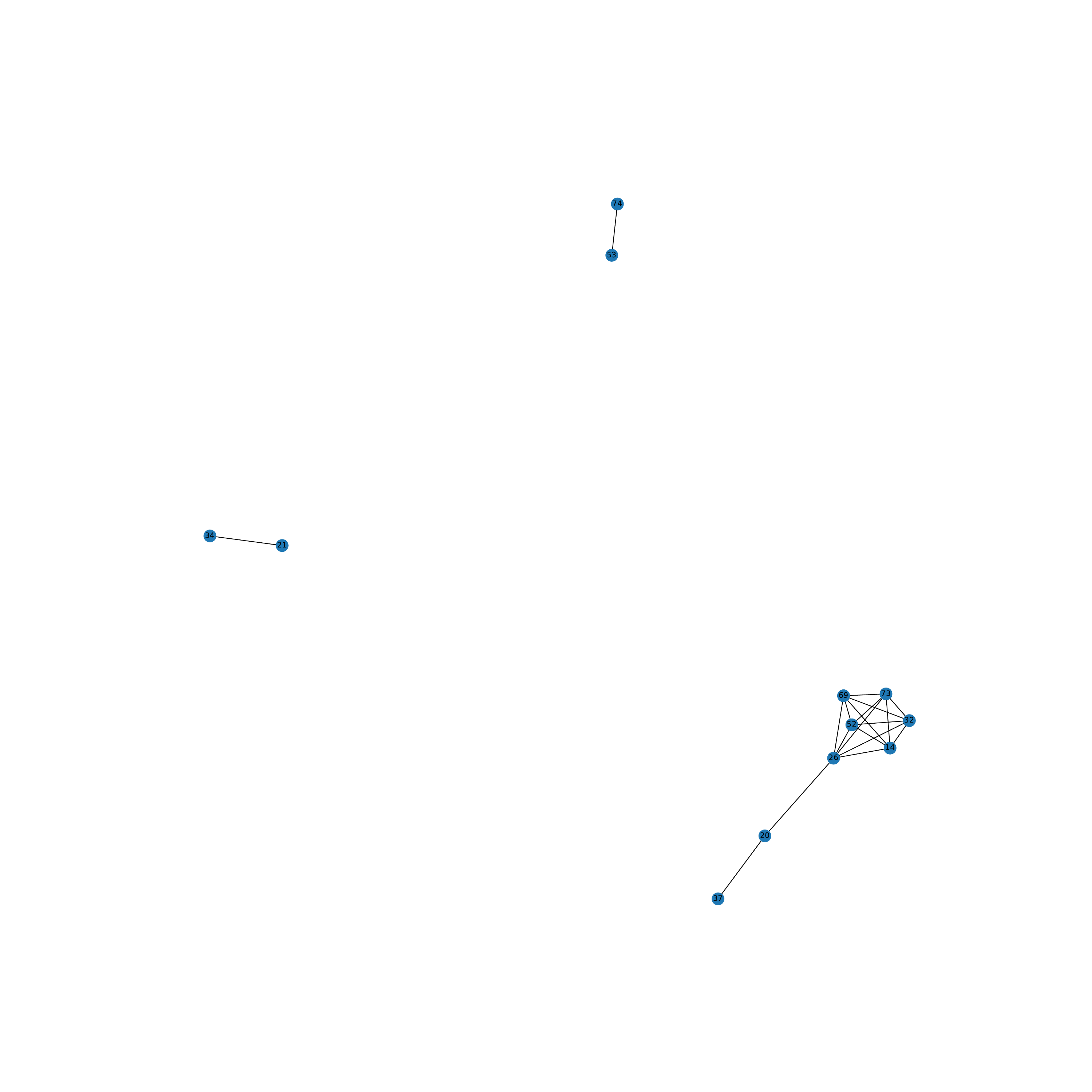}
      \subcaption{Unlearned: Duration Similarity}
    \end{subfigure}
    \begin{subfigure}[ht]{0.32\linewidth}
      \centering
      \includegraphics[width=\linewidth, trim={1cm 1cm 1cm 8cm}, clip]{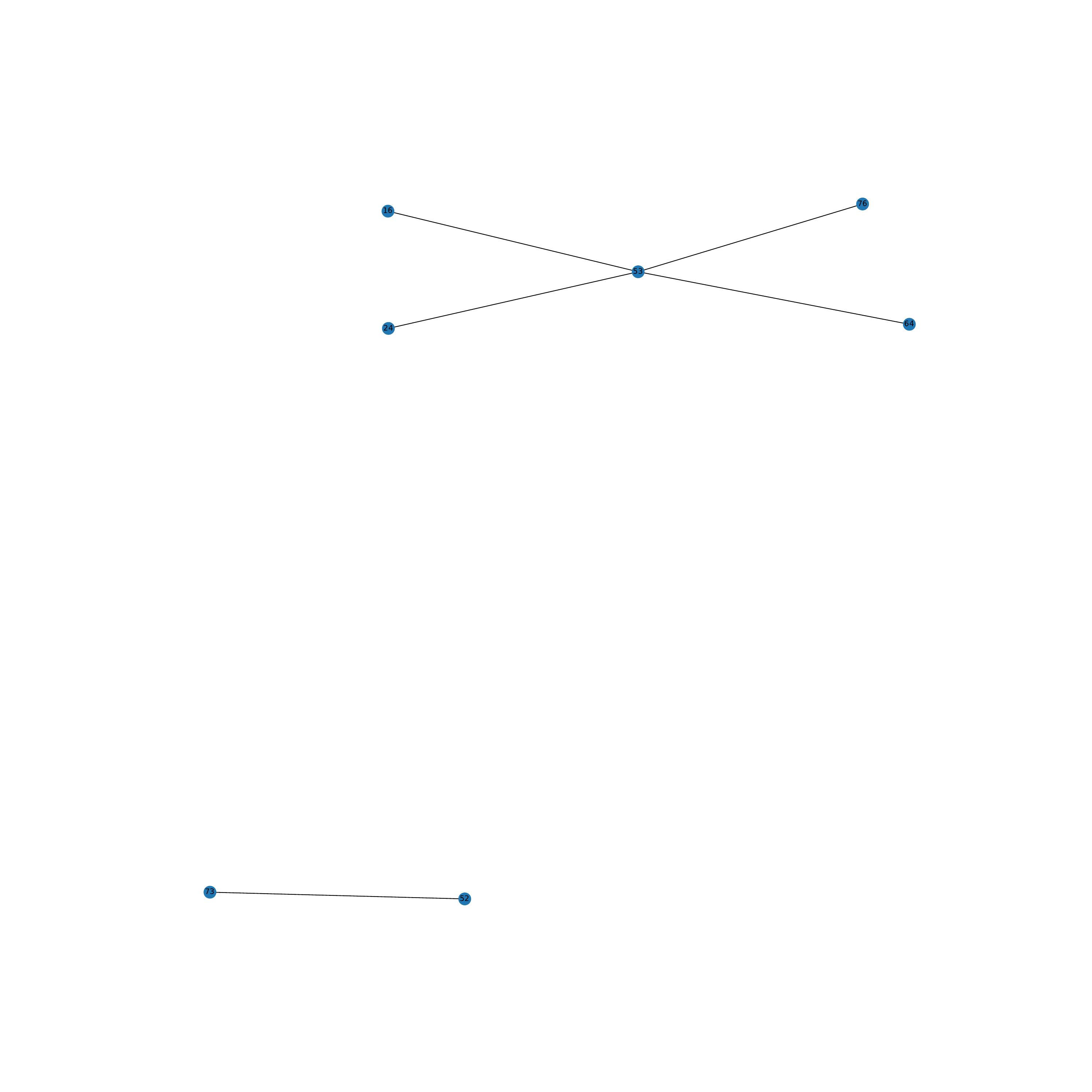}
      \subcaption{Binary: Count Similarity}
    \end{subfigure}
    \begin{subfigure}[ht]{0.32\linewidth}
      \centering
      \includegraphics[width=\linewidth, trim={1cm 1cm 1cm 8cm}, clip]{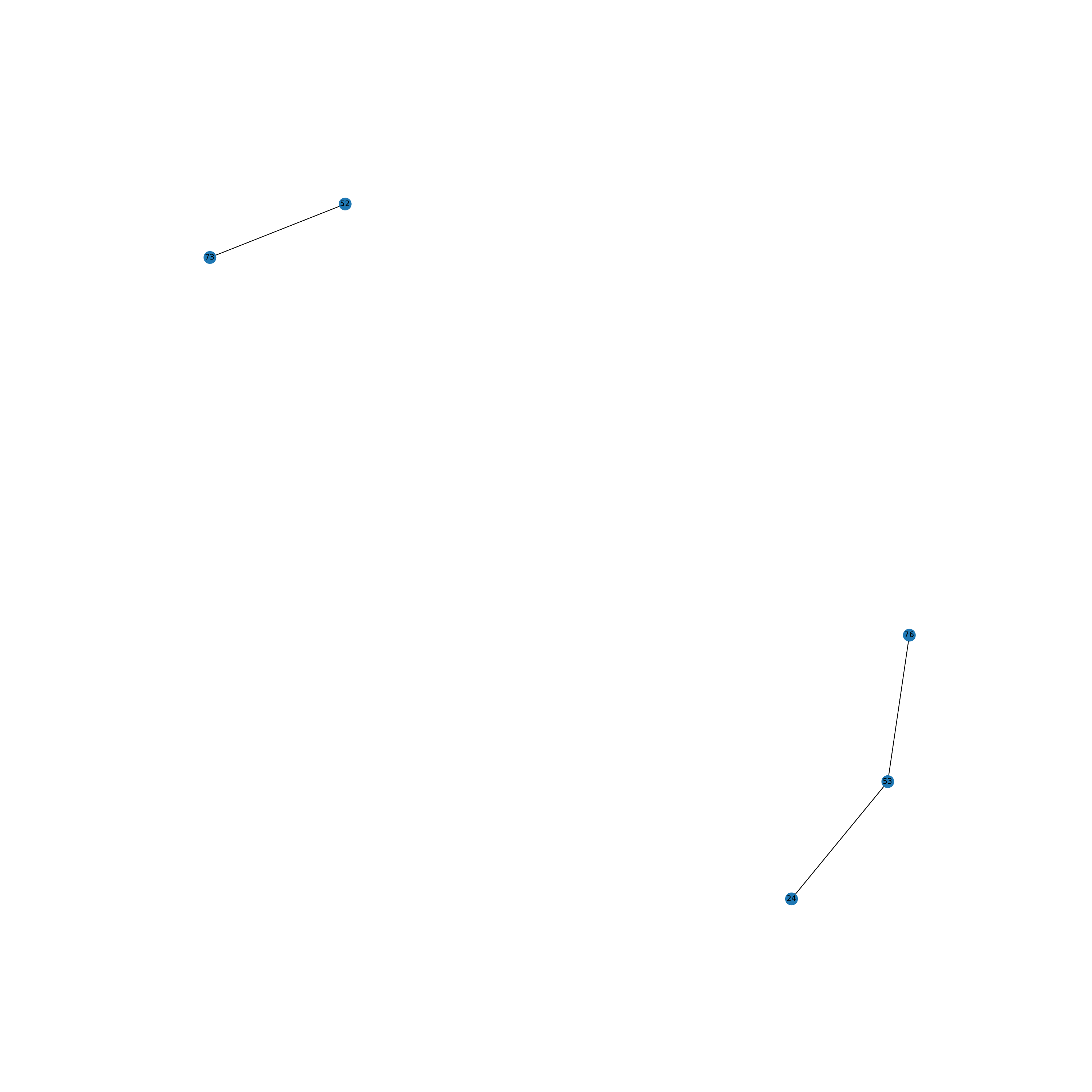}
      \subcaption{Binary: Duration Similarity}
    \end{subfigure}
    \caption{Visualization of thresholded similarity graphs.}
\label{fig:thresholded_similarity}
\end{figure}

\end{document}